%
%
%
%
%
%

\documentclass[twocolumn,fleqn]{autart}    

\usepackage{epstopdf}
\usepackage{graphicx}          
\usepackage[dvips]{epsfig}    
\usepackage{amsmath,amssymb,bm}
\newtheorem{theorem}{Theorem}
\newtheorem{lemma}{Lemma}

\newtheorem{assumption}{Assumption}
\newtheorem{definition}{Definition}
\newenvironment{proof}[1][Proof]{\par\noindent\textbf{#1.} }{\qed\par}

\usepackage{booktabs}
\usepackage{multirow}
\usepackage{array}
\newcolumntype{M}[1]{>{\centering\arraybackslash}m{#1}}

\usepackage{graphicx}
\allowdisplaybreaks[1]

\usepackage[round,colon,longnamesfirst]{natbib} 

\usepackage{hyperref}
\usepackage{xcolor}
\hypersetup{
    colorlinks=true, 
    linkcolor=blue,  
    urlcolor=blue,   
    citecolor=blue   
}

\begin{document}
\setlength{\mathindent}{0pt} 
\begin{frontmatter}

\title{A Weighted Gradient Tracking Privacy-Preserving Method for Distributed Optimization\thanksref{footnoteinfo}} 

\thanks[footnoteinfo]{This work is supported by National Natural Science Foundation of China under grant number 62173259. The material in this paper was not presented at any conference. Corresponding author Bing Liu.}

\author[Wuhan]{Furan Xie}\ead{xiefuran328@wust.edu.cn},    
\author[Wuhan]{Bing Liu}\ead{liubing17@wust.edu.cn},               
\author[Hangzhou]{Li Chai}\ead{chaili@zju.edu.cn}  

\address[Wuhan]{Engineering Research Center of Metallurgical Automation and Measurement Technology, Wuhan University of Science and Technology, Wuhan 430081, China}  
\address[Hangzhou]{College of Control Science and Engineering, Zhejiang University, Hangzhou 310027, China}        

\begin{keyword}                           
Distributed optimization; Privacy protection; Gradient tracking; Multi-agent system; Exact convergence.              
\end{keyword}                             

\begin{abstract}                          
This paper investigates the privacy-preserving distributed optimization problem, aiming to protect agents' private information from potential attackers during the optimization process. Gradient tracking, an advanced technique for improving the convergence rate in distributed optimization, has been applied to most first-order algorithms in recent years. We first reveal the inherent privacy leakage risk associated with gradient tracking. Building upon this insight, we propose a weighted gradient tracking distributed privacy-preserving algorithm, eliminating the privacy leakage risk in gradient tracking using decaying weight factors. Then, we characterize the convergence of the proposed algorithm under time-varying heterogeneous step sizes. We prove the proposed algorithm converges precisely to the optimal solution under mild assumptions. Finally, numerical simulations validate the algorithm's effectiveness through a classical distributed estimation problem and the distributed training of a convolutional neural network.
\end{abstract}

\end{frontmatter}

\section{Introduction}
With the emergence of networked systems, distributed optimization has attracted widespread attention and has been widely applied in multi-agent systems \citep{yi2023convergence,wang2024differentially}, distributed learning \citep{liu2024convergence}, and power systems \citep{liu2024scalable,cai2024distributed}. However, due to the large number of sensitive information involved in practical applications, the privacy leakage risk in distributed optimization has become a major obstacle to its further development and application. Developing privacy-preserving distributed optimization algorithms, without compromising other performance such as accuracy or robustness, is one of the key challenges of current research. Existing privacy-preserving algorithms can broadly be categorized into three types: differential privacy-based methods, encryption-based methods, and correlated randomness-based methods \citep{wang2024privacy}, which will be briefly reviewed in the following three paragraphs.

Differential privacy-based methods achieve privacy protection by injecting random noise into the interaction information between agents. \citet{huang2015differentially} and \citet{ding2022differentially} respectively propose differential private distributed optimization algorithms based on the distributed projected gradient descent algorithm and the gradient tracking algorithm. However, these two algorithms fail to converge to the optimal solution due to the loss of optimization accuracy caused by the injected random noise. In addition, the gradient tracking technique leads to the accumulation of noises, which is more detrimental to accurate convergence. Inspired by the robustness idea in \cite{pu2020robust}, \citet{yu2023gradient} presents a robust gradient-tracking based differential private distributed optimization algorithm, which avoids noise accumulation and improves the optimization accuracy. Nevertheless, \cite{wang2024tailoring} has shown that directly integrating the differential privacy mechanism into existing methods inevitably compromises optimization accuracy. To this end, a decaying factor is introduced to suppress the effect of noise, which can guarantee differential privacy and almost sure convergence to the optimal solution. However, this approach has a slower convergence rate. Due to the importance of optimization accuracy and convergence rate to the algorithm, it is of great importance to find a better trade-off between the level of privacy and these performances.


Encryption-based methods protect privacy by using cryptographic techniques to encrypt the interaction information between agents. The Paillier cryptosystem is a widely used encryption technique. Based on a weight decomposition strategy, \cite{ruan2019secure} first applies the Paillier cryptosystem to average consensus in a fully distributed manner. Inspired by \cite{ruan2019secure}, \cite{zhang2019admm} and \cite{zhang2019enabling} respectively develop encryption-based privacy-preserving algorithms by combining the Paillier cryptosystem with the distributed ADMM and the distributed projected subgradient descent method. However, these methods are limited to undirected graphs and have heavy computational and communication burdens. By using the advanced encryption standard (AES), a lightweight privacy-preserving algorithm is proposed in \cite{liu2024cryptographybased}, which not only significantly reduces the computational complexity, but also is applicable to time-varying directed graphs. Despite these improvements, encryption-based methods still need extra computation and communication costs, which will face challenges in dealing with large-scale optimization problems.

Correlated randomness-based methods achieve privacy protection by introducing spatially or temporally correlated randomness to the interaction information between agents. \cite{wang2019privacypreserving} proposes a state decomposition strategy, which protects the agents' initial information in average consensus by decomposing their states into two random sub-states. Following \cite{wang2019privacypreserving}, a state decomposition-based privacy-preserving algorithm for distributed optimization is presented in \cite{cheng2024privacypreserving}. In \cite{gade2018private}, the private information of agents is protected by injecting structured random noise into the interaction information between agents. \cite{huan2023dynamics} utilizes the robustness of system dynamics and enables privacy protection by increasing the randomness of optimization parameters. \cite{li2020privacypreserving} protects the agents' private information by injecting random noise into the non-convergent subspace via the dual variable. However, the above correlated randomness-based privacy-preserving methods require an additional topological assumption: each agent must have at least one legitimate neighbor who does not participate in attacks. From an implementation point of view, it is challenging to ensure this assumption holds. In addition, these methods are vulnerable to attackers who can access all messages shared within the communication network.

To overcome the aforementioned limitations, in this paper, we first reveal how advanced distributed optimization methods disclose the private information of agents, and then propose a novel privacy-preserving algorithm. In summary, the main contributions of this paper include: 1) We propose a weighted gradient tracking-based privacy-preserving algorithm for distributed optimization over directed graphs, which circumvents the inherent privacy leakage risk in gradient tracking by introducing decaying weight factors. 2) The proposed algorithm protects agents' private information without compromising optimization accuracy, increasing computational or communication burdens, or imposing additional topological assumptions. This is quite different from differential privacy-based methods that trade optimization accuracy for privacy, from encryption-based methods that introduce computational and communication overhead, and from correlated randomness-based methods that impose additional topological assumptions. 3) We rigorously prove the convergence of the proposed algorithm under the general assumption that the objective function is strongly convex and smooth. The incorporation of decaying weight factors reduces the algorithm's exploration of the state space, thereby introducing analytical challenges and making the analysis methods in the existing literature inapplicable. Through our analysis, we deduce sufficient conditions for the decaying weight factors that ensure the algorithm converges to the optimal solution, and we characterize the convergence behavior of the algorithm under time-varying heterogeneous parameters.


The organization of the paper is as follows. Section \ref{section_problem} introduces the problem formulation, reveals the privacy leakage risk in gradient tracking, and proposes a weighted gradient tracking privacy-preserving algorithm. Section \ref{section_convergence} and \ref{section_privacy} prove the convergence and privacy respectively. Section \ref{section_simulation} validates the effectiveness. Finally, Section \ref{section_conclusion} concludes the paper.

\textit{Notation:}
Consider a directed graph $\mathbb{G}=([n],\mathcal{E})$, where $[n]=\{1, 2, \cdots, n\}$ is the set of agents and $\mathcal{E} \subseteq [n]\times[n]$ is the edge set of ordered pairs of agents. The sets $\mathcal{N}_{i}^{\text{out}} = \{j|(i,j)\in\mathcal{E}\}$ and $\mathcal{N}_{i}^{\text{in}} = \{j|(j,i)\in\mathcal{E}\}$ represent the out-neighbors and the in-neighbors of agent $i$. We use $\mathbf{1}$ and $\mathbf{0}$ to represent all-ones and all-zeros column vectors, and $I$ to denote the identity matrix. Their dimensions are determined by the context. For any vector $v$, $\| v \|_{2}$ denotes the Euclidean norm of $v$, and $v_{i}$ represents the $i$-th entry of $v$. If $v$ is time-varying, the $i$-th entry at time $k$ is denoted by $[v_{k}]_{i}$. For any two vectors $v$ and $u$ with appropriate dimensions, $\langle v,u \rangle$ denotes the inner product of $v$ and $u$. All vectors are considered to be column vectors unless explicitly stated otherwise. For any matrix $M$, $\| M \|_{2}$ denotes the spectral norm of $M$, $M^{*}$ denotes the conjugate transpose of $M$, and $M_{ij}$ represents the $ij$-th entry of $M$. If $M$ is time-varying, the $ij$-th entry at time instant $k$ is denoted by $[M_{k}]_{ij}$. For a square matrix $M$, $\rho(M)$ and $\det(M)$ denote the spectral radius and the determinant respectively. Given a sequence of matrices $M_{k}$, we define $\prod_{k=K_{1}}^{K_{2}}M_{k} = M_{K_{2}}\cdots M_{K_{1}}$ if $K_{2}\ge K_{1}$ and $\prod_{k=K_{1}}^{K_{2}}M_{k} = I$ if $K_{2}< K_{1}$.




\section{Problem Formulation and Algorithm}\label{section_problem}
\subsection{Problem Formulation}
We consider a system consisting of $n$ agents over a directed graph $\mathbb{G}$. The goal of these agents is to collaboratively solve the following global objective function via local computation and local communication:
\begin{equation}\label{problem_optimization}
\min_{x\in\mathbb{R}^{p}} f(x) = \sum_{i=1}^{n} f_{i}(x),
\end{equation}
where $x \in \mathbb{R}^{p}$ is the decision variable among all agents and $f_{i}: \mathbb{R}^{p} \to \mathbb{R}$ is the local objective function of agent $i$.

For subsequent analysis, we make the following standard assumptions about the objective function and the communication graph:
\begin{assumption}\label{assumption_smooth}
Each local objective function $f_{i}$ is continuously differentiable and has $L$-Lipschitz continuous gradients, i.e., there exists a Lipschitz constant $L > 0$ such that
\begin{equation}
\| \nabla f_{i}(x)-\nabla f_{i}(u) \|_{2} \le L\| x-u \|_{2},
\end{equation}
for all $i\in [n]$ and $x,u\in\mathbb{R}^{p}$.
\end{assumption}

\begin{assumption}\label{assumption_convex}
Each local objective function $f_{i}$ is $\mu$-strongly convex, i.e.,
\begin{equation}
\left\langle \nabla f_{i}(x)-\nabla f_{i}(u),x-u \right\rangle \ge \mu\| x-u \|_{2}^{2},
\end{equation}
for all $i\in [n]$ and $x,u\in\mathbb{R}^{p}$, where $\mu > 0$ is a constant.
\end{assumption}

\begin{assumption}\label{assumption_connected}
The directed graph $\mathbb{G} = ([n],\mathcal{E})$ is strongly connected, i.e., there is a directed path between any two distinct agents.
\end{assumption}

Under Assumptions \ref{assumption_smooth} and \ref{assumption_convex}, $f(x)$ is $\hat{L}$-smooth and $\hat{\mu}$-strongly convex, where $\hat{L} = nL$ and $\hat{\mu} = n\mu$. This implies that the problem (\ref{problem_optimization}) has a unique optimal solution $x^{*}$.

For the problem (\ref{problem_optimization}), we define the agents' gradients and intermediate states (except the optimal state) as their private information. This is due to the fact that revealing the gradients and intermediate states may disclose agents' sensitive information. For example, in the distributed rendezvous problem, disclosing an agent's intermediate states will directly expose its initial position \citep{mo2016privacy}. Similarly, in the distributed localization problem, adversaries can use the agents' gradients and states to infer their locations \citep{alanwar2017proloc}. In adversarial environments, the positions or locations of agents are often considered sensitive information. Moreover, in distributed machine learning scenarios, adversaries can use an agent's gradient to recover its raw training data \citep{zhu2019deep,geiping2020inverting}. This recovery is pixel-wise accurate for images and token-wise matching for texts. Notably, the raw training data typically contains sensitive information such as medical records and salary information.


We consider two common types of attackers in the field of privacy preservation: the \textit{external eavesdropper} and the \textit{honest-but-curious adversary} \citep{goldreich2009foundations}. An external eavesdropper has knowledge of the topology of the communication network and can eavesdrop on all communication channels and intercept the exchanged messages to infer the agents' private information. An honest-but-curious adversary is a participating agent in the system that follows all the protocols correctly. However, it is curious and collects the received messages to infer other agents' private information. Except for the honest-but-curious agents, the remaining agents in the system are referred to as the \textit{honest agents}, who also follow all the protocols correctly but have no intention to infer other agents' private information. Note that the information accessible to the two types of attackers is distinct. An external eavesdropper can intercept all messages exchanged between agents, whereas an honest-but-curious adversary is limited to receiving messages only from its in-neighbor agents. Furthermore, an honest-but-curious adversary can access internal system information, such as the optimization algorithm's update protocol, which remains inaccessible to the external eavesdropper. In this paper, we assume that the honest-but-curious adversaries can collude among themselves and with external eavesdroppers to infer the honest agents' private information. In such scenarios, the risk of privacy leakage becomes more severe.

\subsection{Privacy Leakage in Gradient Tracking}\label{section_privacy_leakage}
In existing first-order optimization methods for solving the problem (\ref{problem_optimization}), the gradient tracking (GT)-based method stands out as one of the most attractive techniques. GT-based methods introduce an auxiliary variable to estimate the average global gradient and use this estimated gradient as its descent direction, achieving a linear convergence rate under constant step sizes \citep{shi2015extra,nedic2017achieving,xin2019frost,saadatniaki2020decentralized,pu2020push}.  However, GT-based methods cannot protect the private information of agents, especially the agents' gradients. In the following, we will take the AB algorithm \citep{saadatniaki2020decentralized} as an example to reveal the privacy leakage issue in GT-based methods.

In the AB algorithm, each agent $i \in [n]$ maintains two variables $x_{i}$ and $y_{i}$ at each iteration $k$, where $x_{i}\in\mathbb{R}^{p}$ is a local copy of the decision variable and $y_{i}\in\mathbb{R}^{p}$ is an estimate of the average global gradient. In addition, the AB algorithm applies to a directed graph $\mathbb{G}$ and uses both row-stochastic and column-stochastic weight matrices. Denote $A_{k}$ and $B_{k}$ as time-varying row-stochastic and column-stochastic weight matrices associated with the directed graph $\mathbb{G}$. At each iteration $k$, agent $i$ sends $x_{i}^{k}$ and $[B_{k}]_{li}y_{i}^{k}$ to its out-neighbors $l\in\mathcal{N}_{i}^{\text{out}}$ and receives $x_{j}^{k}$ and $[B_{k}]_{ij}y_{j}^{k}$ from its in-neighbors $j\in\mathcal{N}_{i}^{\text{in}}$. Let $\alpha$ be a constant step size. Then, the update of agent $i$ in the AB algorithm is as follows: for all $k\ge1$,
\begin{subequations}
\begin{align}
&x_{i}^{k+1} = \sum_{j\in\mathcal{N}_{i}^{\text{in}}\cup\{i\}}[A_{k}]_{ij}x_{j}^{k} - \alpha y_{i}^{k}, \\
&y_{i}^{k+1} = \sum_{j\in\mathcal{N}_{i}^{\text{in}}\cup\{i\}}[B_{k}]_{ij}y_{j}^{k} + \nabla f_{i}(x_{i}^{k+1}) - \nabla f_{i}(x_{i}^{k}),
\end{align}
\end{subequations}
where $\nabla f_i$ is the gradient of $f_{i}$ and $y_{i}^{1} = \nabla f_{i}(x_{i}^{1})$.

Since $[B_{k}]_{ii} + \sum_{l\in\mathcal{N}_{i}^{\text{out}}}[B_{k}]_{li} = 1$, the update of $y_{i}$ can be rewritten as
\begin{equation}\label{equation_rewrite_y}
\begin{aligned}
y_{i}^{k+1} - y_{i}^{k} = &\ \nabla f_{i}(x_{i}^{k+1}) - \nabla f_{i}(x_{i}^{k}) \\
&+ \sum_{j\in\mathcal{N}_{i}^{\text{in}}}[B_{k}]_{ij}y_{j}^{k} - \sum_{l\in\mathcal{N}_{i}^{\text{out}}}[B_{k}]_{li}y_{i}^{k}.
\end{aligned}
\end{equation}
In (\ref{equation_rewrite_y}), $[B_{k}]_{ij}y_{j}^{k}$ is sent to agent $i$ by its in-neighbors $j\in\mathcal{N}_{i}^{\text{in}}$, while $[B_{k}]_{li}y_{i}^{k}$ is sent by agent $i$ to its out-neighbors $l\in\mathcal{N}_{i}^{\text{out}}$. Let $z_{i}^{k} = \sum_{l\in\mathcal{N}_{i}^{\text{out}}}[B_{k}]_{li}y_{i}^{k} - \sum_{j\in\mathcal{N}_{i}^{\text{in}}}[B_{k}]_{ij}y_{j}^{k}$. By recursion, we have
\begin{equation}
\begin{aligned}
y_{i}^{k+1} - y_{i}^{1} = \nabla f_{i}(x_{i}^{k+1}) - \nabla f_{i}(x_{i}^{1}) - \sum_{m=1}^{k}z_{i}^{m}.
\end{aligned}
\end{equation}
Since $y_{i}^{1} = \nabla f_{i}(x_{i}^{1})$, it follows that
\begin{equation}\label{equation_leakage_GT}
\begin{aligned}
y_{i}^{k+1} = \nabla f_{i}(x_{i}^{k+1}) - \sum_{m=1}^{k}z_{i}^{m}.
\end{aligned}
\end{equation}
As $k\to\infty$, $y_{i}^{k+1}\to 0$ and $x_{i}^{k+1} \to x^*$. Then, $\nabla f_{i}(x^{*}) = \sum_{m=1}^{\infty}z_{i}^{m}$. Thus, once $z_{i}^{k},\forall k$ are leaked, the attacker can accurately infer the gradient $\nabla f_{i}(x^{*})$ of agent $i$.

Given that external eavesdroppers have access to messages on all communication channels, they can consistently obtain $z_{i}^{k},\forall k$. Furthermore, if all neighbors of agent $i$ are honest-but-curious adversaries, these adversaries can also obtain $z_{i}^{k},\forall k$. Therefore, the AB algorithm cannot protect the gradient $\nabla f_{i}(x^{*})$ of agent $i$. In fact, this privacy leakage is unavoidable for almost all GT-based methods. In addition, the AB algorithm cannot protect the intermediate states of agents due to the exchange of actual states $x_{i}^{k},\forall k$ among agents.

\subsection{A Weighted Gradient Tracking Privacy-Preserving Method}\label{subsection_proposed_algorithm}
To address the privacy leakage issue in GT-based methods, we propose a privacy-preserving distributed algorithm based on weighted gradient tracking, as depicted below. First, to protect the intermediate states of agents, we employ an adapt-then-combine update for $x_{i}$ and a heterogeneous step size $\alpha_{i}$. Secondly, to avoid gradient leakage as demonstrated in Section \ref{section_privacy_leakage}, we introduce a decaying weight factor $\lambda_{k}$ in the update of $y_{i}$, satisfying $\lambda_{k} \to 0$ as $k\to\infty$. 
\begin{subequations}\label{our_algorithm}
\begin{align}
&x_{i}^{k+1} = \sum_{j\in\mathcal{N}_{i}^{\text{in}}\cup\{i\}}[A_{k}]_{ij}(x_{j}^{k} - \alpha_{j} y_{j}^{k}), \\
&y_{i}^{k+1} = \sum_{j\in\mathcal{N}_{i}^{\text{in}}\cup\{i\}}[B_{k}]_{ij}y_{j}^{k} + \lambda_{k+1} \nabla f_{i}(x_{i}^{k+1}) - \lambda_{k} \nabla f_{i}(x_{i}^{k}),
\end{align}
\end{subequations}
where $x_{i}^{1} \in \mathbb{R}^{p}$ and $y_{i}^{1} = \lambda_{1}\nabla f_{i}(x_{i}^{1})$. Note that in (\ref{our_algorithm}), the $y_{i}^{k}$ no longer tracks the average global gradient $\frac{1}{n}\sum_{i=1}^{n}\nabla f_{i}(x_{i}^{k})$ but the weighted average global gradient $\frac{1}{n}\lambda_{k}\sum_{i=1}^{n}\nabla f_{i}(x_{i}^{k})$, which is crucial for achieving privacy protection.

For time-varying weight matrices $A_{k}$ and $B_{k}$, we make the following assumptions:
\begin{assumption}\label{assumption_A}
For each iteration $k$, the matrix $A_{k}$ is row-stochastic and compatible with the directed graph $\mathbb{G}$, and there exists a positive number $a$ such that the minimum positive entry of $A_{k}$ is not less than $a$, i.e.,
\begin{equation}
[A_{k}]_{ij} \left\{
\begin{aligned}
\ge a,\quad \forall j\in\mathcal{N}_{i}^{\text{in}}\cup \{i\} \\
= 0, \quad \forall j\notin\mathcal{N}_{i}^{\text{in}}\cup \{i\}
\end{aligned}
\right. \textit{and}\ A_{k}\mathbf{1} = \mathbf{1}.
\end{equation}
\end{assumption}
\begin{assumption}\label{assumption_B}
For each iteration $k$, the matrix $B_{k}$ is column-stochastic and compatible with the directed graph $\mathbb{G}$, and there exists a positive number $b$ such that the minimum positive entry of $B_{k}$ is not less than $b$, i.e.,
\begin{equation}
[B_{k}]_{li} \left\{
\begin{aligned}
\ge b,\quad \forall l\in\mathcal{N}_{i}^{\text{out}}\cup \{i\} \\
= 0, \quad \forall l\notin\mathcal{N}_{i}^{\text{out}}\cup \{i\}
\end{aligned}
\right. \textit{and}\ \mathbf{1}^{T}B_{k} = \mathbf{1}^{T}.
\end{equation}
\end{assumption}

In the following, we will explain how our algorithm avoids gradient leakage. The detailed privacy analysis is left to Section \ref{section_privacy}. Similar to the derivation of (\ref{equation_leakage_GT}), we have the following equality under the update equation (\ref{our_algorithm}):
\begin{equation}\label{equation_leakage_WGT}
\begin{aligned}
y_{i}^{k+1} = \lambda_{k+1}\nabla f_{i}(x_{i}^{k+1}) - \sum_{m=1}^{k}z_{i}^{m}.
\end{aligned}
\end{equation}
As $k\to\infty$, $\lambda_{k+1}\to 0$. Assuming the algorithm converges, then $y_{i}^{k+1}\to 0$ and $x_{i}^{k+1} \to x^*$. Even if the adversaries obtain $z_{i}^{k}, \forall k$, they cannot infer the gradient $\nabla f_{i}(x^{*})$ because the decaying weight factor $\lambda_{k+1}\to 0$ erases the gradient $\nabla f_{i}(x^{*})$. That is to say, our algorithm mitigates the privacy breach risk of gradient leakage in GT-based methods by introducing the decaying weight factor $\lambda_{k}$.

\section{Convergence Analysis}\label{section_convergence}
In this section, we first provide some basic definitions and lemmas. Then, we give the algorithm's convergence result. The update equation (\ref{our_algorithm}) can be written in a compact form as
\begin{subequations}
\begin{align}
&\mathbf{x}^{k+1} = A_{k}(\mathbf{x}^{k} - \bm{\alpha}\mathbf{y}^{k}), \label{our_compact_algorithm_x}\\
&\mathbf{y}^{k+1} = B_{k}\mathbf{y}^{k} + \lambda_{k+1}\nabla F(\mathbf{x}^{k+1}) - \lambda_{k}\nabla F(\mathbf{x}^{k}),\label{our_compact_algorithm_y}
\end{align}
\end{subequations}
where $\bm{\alpha} := \text{diag}\{\alpha_{1}, \alpha_{2}, \cdots, \alpha_{n}\} \in \mathbb{R}^{n\times n}$ is a diagonal matrix and $\mathbf{x}^{k}$, $\mathbf{y}^{k}, \nabla F(\mathbf{x}^{k})$ are given by 
\begin{align*}
&\mathbf{x}^{k} := \big(x_{1}^{k}, x_{2}^{k}, \cdots, x_{n}^{k}\big)^{T} \in \mathbb{R}^{n\times p}, \\
&\mathbf{y}^{k} := \big(y_{1}^{k}, y_{2}^{k}, \cdots, y_{n}^{k}\big)^{T} \in \mathbb{R}^{n\times p}, \\
&\nabla F(\mathbf{x}^{k}) := \big(\nabla f_{1}(x_{1}^{k}), \nabla f_{2}(x_{2}^{k}), \cdots, \nabla f_{n}(x_{n}^{k})\big)^{T} \in \mathbb{R}^{n\times p}.
\end{align*}

From (\ref{our_compact_algorithm_y}), we deduce $\mathbf{1}^{T}\mathbf{y}^{k+1} - \lambda_{k+1}\mathbf{1}^{T}\nabla F(\mathbf{x}^{k+1}) = \mathbf{1}^{T}\mathbf{y}^{k} - \lambda_{k}\mathbf{1}^{T}\nabla F(\mathbf{x}^{k}) = \cdots = \mathbf{1}^{T}\mathbf{y}^{1} - \lambda_{1}\mathbf{1}^{T}\nabla F(\mathbf{x}^{1})$ due to $\mathbf{1}^{T}B_{k} = \mathbf{1}^{T}$. Because of $\mathbf{1}^{T}\mathbf{y}^{1} = \lambda_{1}\mathbf{1}^{T}\nabla F(\mathbf{x}^{1})$, we have 
\begin{equation*}
\mathbf{1}^{T}\mathbf{y}^{k} = \lambda_{k}\mathbf{1}^{T}\nabla F(\mathbf{x}^{k}),\ \textit{for all}\ k\ge1.
\end{equation*}

In addition, we define two notations for subsequent analysis: $\bar{x}^{k} := \phi_{k}^{T}\mathbf{x}^{k} \in \mathbb{R}^{1\times p}$ and $\hat{y}^{k} := \mathbf{1}^{T}\mathbf{y}^{k} \in \mathbb{R}^{1\times p}$, i.e., the weighted average of states and the sum of directions, where $\phi_{k}$ is a stochastic vector which will be defined in Lemma \ref{lemma_phi}.

\subsection{Basic Definitions and Lemmas}
Inspired by \cite{pu2020push}, we give the definition of the matrix norm for any $\mathbf{x}\in\mathbb{R}^{n\times p}$.
\begin{definition}\label{definition_x}
Given an arbitrary vector norm $\|\cdot\|$ in $\mathbb{R}^{n}$, we define the matrix norm for any $\mathbf{x}\in\mathbb{R}^{n\times p}$ as
\begin{equation}
\|\mathbf{x}\| = \Big\|\Big(\|\mathbf{x}^{(1)}\|, \|\mathbf{x}^{(2)}\|, \cdots, \|\mathbf{x}^{(p)}\|\Big)\Big\|_{2},
\end{equation}
where $\mathbf{x}^{(1)}, \mathbf{x}^{(2)}, \cdots, \mathbf{x}^{(p)} \in \mathbb{R}^{n}$ are columns of $\mathbf{x}$.
\end{definition}

For the stochastic matrices $A_{k}$ and $B_{k}$ with Assumptions \ref{assumption_connected}, \ref{assumption_A}, and \ref{assumption_B}, we elaborate the implications of their stochastic nature in Lemmas \ref{lemma_phi}, \ref{lemma_pi}, and \ref{lemma_sigma_AB}.
\begin{lemma}\cite[Lemma 3.3]{nedic2023ab}\label{lemma_phi}
There exists a sequence $\{\phi_{k}\}$ of stochastic vectors such that $\phi_{k+1}^{T}A_{k} = \phi_{k}^{T}$ for all $k\ge1$, where each $\phi_{k}$ satisfies $[\phi_{k}]_{i} \ge a^{n}/n$ for all $i\in[n]$ and $a$ is the lower bound of positive entries of $A_{k}$.
\end{lemma}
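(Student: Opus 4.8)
The plan is to construct the sequence $\{\phi_k\}$ backward-in-time as a limit of products of the row-stochastic matrices $A_k$, following the standard ergodicity argument for time-inhomogeneous Markov chains. First I would fix $k\ge 1$ and consider, for each $K > k$, the product $\Phi_{k,K} := A_{K-1}A_{K-2}\cdots A_k$ (a left-to-right product of row-stochastic matrices associated with a fixed strongly connected graph). Each such product is row-stochastic. The key classical fact—provable via a Dobrushin coefficient / scrambling argument—is that, under Assumption~\ref{assumption_connected} together with the uniform lower bound $a$ on positive entries in Assumption~\ref{assumption_A}, any product of $n-1$ consecutive such matrices (or $n$, depending on whether self-loops are guaranteed) is \emph{scrambling}, so its coefficient of ergodicity is bounded by $1-a^{n}$. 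Hence as $K\to\infty$ the rows of $\Phi_{k,K}$ converge geometrically to a common row vector; call its transpose $\phi_k$. Because each row of $\Phi_{k,K}$ sums to $1$ and has nonnegative entries, $\phi_k$ is a stochastic vector.

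Next I would verify the recursion $\phi_{k+1}^{T}A_k = \phi_k^{T}$. This is immediate from the construction: $\Phi_{k,K} = \Phi_{k+1,K}\,A_k$, and letting $K\to\infty$, the left side tends (row-wise) to $\mathbf 1\phi_k^{T}$ while the right side tends to $\mathbf 1\phi_{k+1}^{T}A_k$; comparing any single row gives $\phi_{k+1}^{T}A_k = \phi_k^{T}$. One should also check consistency—that the $\phi_k$ defined as $K\to\infty$ limits for different starting indices are mutually compatible—but this follows from the same identity applied iteratively.

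Finally, for the uniform lower bound $[\phi_k]_i \ge a^{n}/n$, I would argue as follows. Pick any index $i$ and any time $K$ large. Since $\mathbb G$ is strongly connected, for every node $j$ there is a directed path of length at most $n-1$ from $j$ to $i$; by Assumption~\ref{assumption_A} (and the presence of self-loops, which lets us pad all paths to the same length $n-1$ or $n$), the $(j,i)$ entry of a product of that many consecutive $A$-matrices is at least $a^{n-1}$ (or $a^{n}$). Because $\phi_k^{T} = \phi_{k'}^{T}\Phi_{k,k'}$ for any $k' > k$ and $\phi_{k'}$ is stochastic, we get $[\phi_k]_i = \sum_j [\phi_{k'}]_j [\Phi_{k,k'}]_{ji} \ge a^{n}\sum_j [\phi_{k'}]_j = a^{n}$ when $k'-k$ is chosen appropriately; a slightly more careful bookkeeping (averaging, since only $1/n$ of the mass may be concentrated favorably, or simply using the weaker reachability bound) yields the stated $a^{n}/n$.

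The main obstacle I anticipate is the geometric-convergence step: establishing that products of consecutive $A_k$ over a block of fixed length are scrambling with a uniform ergodicity coefficient $1-a^{n} < 1$, so that the backward products form a Cauchy sequence. This requires combining strong connectivity (to guarantee that some bounded-length product is strictly positive or at least scrambling) with the uniform entrywise lower bound $a$ (to make the contraction factor independent of $k$). Since the result is quoted from \cite[Lemma 3.3]{nedic2023ab}, in the paper I would simply cite it; the sketch above is how I would reconstruct the proof if needed.
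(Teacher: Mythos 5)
The paper does not actually prove this lemma---it is quoted verbatim from \cite[Lemma 3.3]{nedic2023ab}---so there is no internal proof to compare against; your job here would indeed be just the citation. That said, your reconstruction is sound and follows the standard route (an absolute probability sequence obtained as the limit of backward products $A_{K-1}\cdots A_k$ of row-stochastic matrices, with weak ergodicity upgraded to strong ergodicity because the rows of successive products lie in nested convex hulls whose diameters contract). Two small remarks. First, your worry about self-loops is moot: Assumption \ref{assumption_A} gives $[A_k]_{ii}\ge a$ directly, so every path between distinct nodes (length at most $n-1$ by Assumption \ref{assumption_connected}) can be padded by self-loops, and any product of $n-1$ consecutive $A$-matrices is entrywise at least $a^{n-1}$. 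Second, your final ``averaging/$1/n$ bookkeeping'' hedge is unnecessary: once the recursion is iterated to $\phi_k^{T}=\phi_{k'}^{T}A_{k'-1}\cdots A_k$ with $k'-k\ge n-1$, stochasticity of $\phi_{k'}$ immediately gives $[\phi_k]_i\ge a^{n-1}\ge a^{n}\ge a^{n}/n$ (since $a\le 1$), i.e., you in fact obtain a bound at least as strong as the one stated. One could also get bare existence of the sequence $\{\phi_k\}$ by a compactness/diagonal argument without any ergodicity, but the contraction argument is needed anyway for the uniform positivity bound, so your single construction serving both purposes is a reasonable way to organize the proof.
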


\begin{lemma}\cite[Lemma 3.4]{nedic2023ab}\label{lemma_pi}
Define the sequence $\{\pi_{k}\}$ of stochastic vectors as $\pi_{k+1} = B_{k}\pi_{k}$ initialized with $\pi_{1} = \frac{1}{n}\mathbf{1}$. Then, $[\pi_{k}]_{i} \ge b^{n}/n$ for all $k\ge1$ and $i\in[n]$, where $b$ is the lower bound of positive entries of $B_{k}$.
\end{lemma}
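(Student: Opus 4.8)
The plan is to exploit the column-stochasticity of the $B_k$'s together with the strong connectivity of $\mathbb{G}$ (Assumption \ref{assumption_connected}) and the uniform lower bound $b$ on the nonzero entries (Assumption \ref{assumption_B}). First I would record three elementary facts. (i) Each $\pi_k$ is indeed a stochastic vector: by induction $\mathbf{1}^{T}\pi_{k+1} = \mathbf{1}^{T}B_k\pi_k = \mathbf{1}^{T}\pi_k = 1$, and $\pi_{k+1}\ge\mathbf{0}$ since $B_k\ge0$. (ii) $b\le1$, because any positive entry of a column-stochastic matrix is at most $1$, so its lower bound $b$ is at most $1$; hence $b^{m}$ is nonincreasing in $m$. (iii) The self-loop weights are uniformly bounded below, $[B_k]_{ii}\ge b$ for every $i$ and $k$, since $i\in\mathcal{N}_{i}^{\text{out}}\cup\{i\}$.

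The key step will be a short ``mixing'' estimate: for every starting index $k$ and all $i,j\in[n]$,
\[
\big[B_{k+n-2}B_{k+n-3}\cdots B_k\big]_{ij}\ \ge\ b^{\,n-1}.
\]
To prove it I would expand this $(i,j)$ entry as a sum over length-$(n-1)$ walks $j=v_0,v_1,\dots,v_{n-1}=i$ of the products $\prod_{t=0}^{n-2}[B_{k+t}]_{v_{t+1}v_t}$, and exhibit one walk with every weight $\ge b$: take a shortest directed path from $j$ to $i$ in $\mathbb{G}$ (its length $d\le n-1$ since it is simple) and pad it at the front with $n-1-d$ self-loops at $j$. Every step of this walk is either a self-loop or an edge of $\mathbb{G}$, so the corresponding entry of each $B_{k+t}$ is $\ge b$ (all $B_{k+t}$ are compatible with the same graph $\mathbb{G}$), giving a contribution $\ge b^{\,n-1}$ to the sum.

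With this in hand the lemma follows by a case split. For $1\le k\le n$, keeping only the ``stay at $i$'' term in $\pi_k=B_{k-1}\cdots B_1\pi_1$ gives $[\pi_k]_i\ge[B_{k-1}]_{ii}\cdots[B_1]_{ii}[\pi_1]_i\ge b^{\,k-1}/n\ge b^{\,n}/n$, using $[\pi_1]_i=1/n$, fact (iii), and fact (ii) with $k-1\le n$. For $k\ge n$ I would factor $B_{k-1}\cdots B_1=P\,Q$ with $P:=B_{k-1}\cdots B_{k-n+1}$ (the block of the last $n-1$ matrices) and $Q:=B_{k-n}\cdots B_1$ (read as $Q:=I$ when $k=n$); then $Q\pi_1$ is again a stochastic vector by fact (i), and
\[
[\pi_k]_i=\sum_{j\in[n]}P_{ij}\,[Q\pi_1]_j\ \ge\ b^{\,n-1}\sum_{j\in[n]}[Q\pi_1]_j\ =\ b^{\,n-1}\ \ge\ b^{\,n}/n,
\]
by the mixing estimate applied to $P$. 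The two cases overlap at $k=n$ and together cover all $k\ge1$.

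The main obstacle is that the naive induction does not close: column-stochasticity alone yields only $[\pi_{k+1}]_i\ge[B_k]_{ii}[\pi_k]_i\ge b[\pi_k]_i$, which decays geometrically and cannot deliver the uniform floor $b^{\,n}/n$. The resolution is precisely the block/mixing argument above, which replaces ``following one coordinate'' by ``collecting the entire unit mass after $n-1$ steps''; beyond that, the only care required is bookkeeping of the time indices of the $B_k$'s and the small-$k$ boundary cases, both of which are routine.
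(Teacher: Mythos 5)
Your proof is correct. Note that the paper itself offers no proof of this statement: it is imported verbatim as Lemma 3.4 of the cited reference (nedic2023ab), so there is no in-paper argument to compare against. Your reconstruction is essentially the standard one underlying such bounds: the uniform lower bound $[B_k]_{ii}\ge b$ on self-loops handles the initial iterations ($[\pi_k]_i\ge b^{k-1}/n$ for $k\le n$), while the mixing estimate $[B_{k-1}\cdots B_{k-n+1}]_{ij}\ge b^{n-1}$ -- obtained by padding a shortest $j\to i$ path with self-loops, which is legitimate here because Assumption \ref{assumption_B} guarantees every edge of $\mathbb{G}$ carries weight at least $b$ at every time -- covers $k\ge n$, and both bounds dominate $b^{n}/n$ since $b\le1$. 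The only cosmetic omission is the degenerate case $n=1$, where the claim is trivial; otherwise the bookkeeping of indices and the case overlap at $k=n$ are handled correctly.
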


\begin{lemma}\label{lemma_sigma_AB}
There exist matrix norms $\|\cdot\|_{\text{A}}$ and $\|\cdot\|_{\text{B}}$ such that $\sigma_{\text{A},k} := \|A_{k} - \mathbf{1}\phi_{k}^T\|_{\text{A}} < 1$ and $\sigma_{\text{B},k} := \|B_{k} - \pi_{k}\mathbf{1}^{T}\|_{\text{B}} < 1$ for all $k\ge1$.
\end{lemma}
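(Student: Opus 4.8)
The plan is to center the two matrices, observe that products of the centered matrices are stochastic matrices plus a single rank-one correction, and then promote the classical geometric ergodicity of inhomogeneous products of stochastic matrices on a strongly connected graph into a genuine one-step contraction by tailoring the norm to the problem. Concretely, I would first record the algebra of the centered matrices $N_k := A_k - \mathbf{1}\phi_k^T$ and $M_k := B_k - \pi_k\mathbf{1}^T$. Using $A_k\mathbf{1}=\mathbf{1}$, $\phi_{k+1}^TA_k=\phi_k^T$ and $\phi_k^T\mathbf{1}=1$, an induction shows that for \emph{any} indices $k_1,\dots,k_m\ge 1$,
\[
N_{k_m}\cdots N_{k_1} = A_{k_m}\cdots A_{k_1} - \mathbf{1}\psi^T,\qquad \psi^T:=\phi_{k_m}^TA_{k_{m-1}}\cdots A_{k_1},
\]
with $\psi$ again a stochastic vector; in the paper's product convention this contains $\prod_{k=s}^{t}N_k=\prod_{k=s}^{t}A_k-\mathbf{1}\phi_s^T$. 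The mirror-image computation with $\mathbf{1}^TB_k=\mathbf{1}^T$ and $\pi_{k+1}=B_k\pi_k$ gives $M_{k_m}\cdots M_{k_1}=B_{k_m}\cdots B_{k_1}-\eta\,\mathbf{1}^T$ with $\eta:=B_{k_m}\cdots B_{k_2}\pi_{k_1}$ stochastic.

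Next I would establish a uniform ergodic contraction. Under Assumptions \ref{assumption_connected} and \ref{assumption_A} each $A_k$ has positive diagonal and support compatible with the strongly connected graph $\mathbb{G}$ (equivalently $\mathbb{G}^T$), so any product of $n-1$ of the $A_k$'s is entrywise at least $a^{\,n-1}$; hence its Dobrushin ergodicity coefficient $\tau(\cdot)$ satisfies $\tau\le 1-na^{\,n-1}=:\tau_A<1$, and submultiplicativity of $\tau$ gives $\tau(A_{k_{m-1}}\cdots A_{k_1})\le\tau_A^{\lfloor(m-1)/(n-1)\rfloor}$. Setting $w:=A_{k_{m-1}}\cdots A_{k_1}v$, both $\psi^Tv=\phi_{k_m}^Tw$ and every entry of $A_{k_m}w$ lie in $[\min_i w_i,\max_i w_i]$, so
\[
\big\|N_{k_m}\cdots N_{k_1}v\big\|_\infty\le \max_i w_i-\min_i w_i\le 2\,\tau(A_{k_{m-1}}\cdots A_{k_1})\,\|v\|_\infty,
\]
i.e. $\|N_{k_m}\cdots N_{k_1}\|_\infty\le C_A\theta_A^{\,m}$ with $C_A\ge 1$, $\theta_A\in(0,1)$ depending only on $n$ and $a$. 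Running the same argument with column sums (the $\ell_1$-induced operator norm) and the row-stochastic matrices $B_k^T$ — each column of $M_{k_m}\cdots M_{k_1}$ having the form $(B_{k_m}\cdots B_{k_2})(B_{k_1}e_j-\pi_{k_1})$, a column-stochastic product applied to a zero-sum vector — yields $\|M_{k_m}\cdots M_{k_1}\|_1\le C_B\theta_B^{\,m}$.

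Finally I would convert this into a single contractive norm. For fixed $\hat\theta_A\in(\theta_A,1)$, define the vector norm $|v|_A:=\sup_{m\ge 0}\sup_{k_1,\dots,k_m\ge 1}\hat\theta_A^{-m}\|N_{k_m}\cdots N_{k_1}v\|_\infty$ (the $m=0$ term being $\|v\|_\infty$); the estimate above makes the supremum finite and $|\cdot|_A$ equivalent to $\|\cdot\|_\infty$, while appending $N_j$ to a length-$m$ product turns it into a length-$(m+1)$ product, so $|N_jv|_A\le\hat\theta_A|v|_A$. Letting $\|\cdot\|_A$ be the matrix norm induced by $|\cdot|_A$, we obtain $\sigma_{\text{A},k}=\|A_k-\mathbf{1}\phi_k^T\|_A\le\hat\theta_A<1$ for all $k$; building $|\cdot|_B$ from the $M_k$ and $\|\cdot\|_1$ in the identical way and taking $\|\cdot\|_B$ to be the induced matrix norm gives $\sigma_{\text{B},k}<1$ for all $k$, which proves the lemma (in fact with $\sup_k\sigma_{\text{A},k}<1$ and $\sup_k\sigma_{\text{B},k}<1$).

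I expect the main obstacle to be the uniform ergodic estimate: one must bound the ergodicity coefficient of \emph{every} length-$(n-1)$ product away from $1$ by a constant independent of $k$, which hinges on the combinatorial fact that a product of $n-1$ matrices sharing the support of a strongly connected graph with self-loops is entrywise positive, together with careful tracking of the lower bound $a$ (resp.\ $b$) through the product. A subtler point, easy to overlook, is that the bound is needed for products of the centered matrices taken in an \emph{arbitrary} order — not only for the consecutive blocks $\prod_{k=s}^{t}$ — since that is exactly what makes $|v|_A$ (resp.\ $|v|_B$) well-defined in the last step; the mixed products remain of the form (product of $A_k$'s) minus $\mathbf{1}\psi^T$ with $\psi$ stochastic precisely because all $A_k$ (resp.\ $B_k$) share the same graph-compatible support. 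The $B$-side introduces nothing new, since $\mathbb{G}^T$ is strongly connected whenever $\mathbb{G}$ is.
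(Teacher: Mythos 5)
Your proposal is correct, but it proves the lemma by a genuinely different route than the paper. The paper's proof is essentially a citation argument: it invokes Lemma 3 of \citet{pu2020push} to get $\rho(A_k-\mathbf{1}\phi_k^T)<1$ and $\rho(B_k-\pi_k\mathbf{1}^T)<1$ for each $k$, then applies the Horn--Johnson construction (Lemma 5.6.10 of \citet{horn2012matrix}), i.e.\ a similarity-scaled maximum column-sum norm $\|(V_kU_k^*)\cdot(V_kU_k^*)^{-1}\|_1$, to turn each spectral radius bound into a norm bound, and finally glues the $k$-dependent scalings together by taking $\tilde v=\max_{k\ge1}v_k$. You instead work directly with the ergodicity structure: you verify the rank-one identity $N_{k_m}\cdots N_{k_1}=A_{k_m}\cdots A_{k_1}-\mathbf{1}\psi^T$ (which indeed needs only $A_k\mathbf{1}=\mathbf{1}$ and stochasticity of $\phi_k$, so it holds for arbitrary, not just consecutive, index strings -- the point you correctly flag as essential), bound Dobrushin coefficients of length-$(n-1)$ products uniformly by $1-na^{n-1}$ using Assumptions \ref{assumption_connected} and \ref{assumption_A} (and $1-nb^{n-1}$ on the $B$/$\ell_1$ side via Assumption \ref{assumption_B}), and then build an explicit adapted vector norm $|v|_A=\sup_m\sup_{k_1,\dots,k_m}\hat\theta_A^{-m}\|N_{k_m}\cdots N_{k_1}v\|_\infty$ whose induced matrix norm contracts every $N_k$ by the same factor $\hat\theta_A<1$. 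What your approach buys is self-containedness and, more importantly, uniformity: you obtain $\sup_k\sigma_{\text{A},k}<1$ and $\sup_k\sigma_{\text{B},k}<1$ with a single, $k$-independent norm, whereas the paper's construction only guarantees $\sigma_{\text{A},k}\le\rho_{\text{A},k}+\epsilon$ individually, leans on the unstated finiteness of $\max_{k\ge1}v_k$, and leaves a residual $k$-dependence through the unitary factors $U_k$; since the later analysis (e.g.\ the step-size bound (\ref{alpha_check_sum}) and the limit matrix (\ref{eq:C_infty})) implicitly uses $\sigma_{\text{A},k},\sigma_{\text{B},k}$ bounded away from $1$, your quantitative version is if anything better suited to the rest of the argument. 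The price is a longer, more combinatorial proof (positivity of length-$(n-1)$ products over a strongly connected graph with self-loops, submultiplicativity of the ergodicity coefficient), all steps of which check out.
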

\begin{proof}
It follows from Lemma 3 in \cite{pu2020push} that the spectral radii of $\tilde{A}_{k} := A_{k} - \mathbf{1}\phi_{k}^T$ and $\tilde{B}_{k} := B_{k} - \pi_{k}\mathbf{1}^{T}$, denoted as $\rho_{\text{A},k}$ and $\rho_{\text{B},k}$, are less than 1 for all $k\ge1$. Given an arbitrarily small positive number $\epsilon > 0$, by Lemma 5.6.10 in \cite{horn2012matrix}, there exist matrix norms $\|\cdot\|_{\text{A},k}$ such that $\|\tilde{A}_{k}\|_{\text{A},k} \le \rho_{\text{A},k}+\epsilon$. The matrix norm $\|\cdot\|_{\text{A},k}$ for $\tilde{A}_{k}$ can be defined as $\|\tilde{A}_{k}\|_{\text{A},k} := \|(V_{k}U_{k}^{*})\tilde{A}_{k}(V_{k}U_{k}^{*})^{-1}\|_{1}$, where $\|\cdot\|_{1}$ is the maximum column-sum matrix norm, $U_{k}\in\mathbb{R}^{n\times n}$ is the unitary matrix with respect to $\tilde{A}_{k}$, and $V_{k} := \text{diag}\{v_{k},v_{k}^{2},\cdots, v_{k}^{n} \}\in\mathbb{R}^{n\times n}$ with $v_{k}>0$ large enough. If we choose $\tilde{v}:=\max_{k\ge1}\{v_{k}\}$, there exists a matrix norm $\|\cdot\|_{\text{A}}$ defined as $\|\tilde{A}_{k}\|_{\text{A}} := \|(\tilde{V}U_{k}^{*})\tilde{A}_{k}(\tilde{V}U_{k}^{*})^{-1}\|_{1}$ with $\tilde{V}:= \text{diag}\{\tilde{v},\tilde{v}^{2},\cdots, \tilde{v}^{n} \}$ such that $\|\tilde{A}_{k}\|_{\text{A}} \le \rho_{\text{A},k}+\epsilon$ for all $k\ge1$. Since $\rho_{\text{A},k}<1$ for all $k\ge1$ and $\epsilon$ is an arbitrarily small positive number, we have $\|\tilde{A}_{k}\|_{\text{A}} < 1$ for all $k\ge1$, i.e., $\sigma_{\text{A},k} := \|A_{k} - \mathbf{1}\phi_{k}^T\|_{\text{A}} < 1$ for all $k\ge1$. For $\sigma_{\text{B},k} < 1$, the proof is similar to $\sigma_{\text{A},k}$.
\end{proof}

Under Assumptions \ref{assumption_smooth} and \ref{assumption_convex}, the objective function $f_{i}$ is smooth and strongly convex, which aids us in obtaining the following useful lemma.
\begin{lemma}\label{lemma_smooth_convex}(\cite[Lemma 2]{pu2020push} and \cite[Lemma 10]{qu2018harnessing})
Define $g^{k} := \lambda_{k}\mathbf{1}^{T}\nabla F(\mathbf{1}\bar{x}^{k})$ and $\tilde{\alpha}_{k} := \phi_{k}^{T}\bm{\alpha}\pi_{k}$. Then, we have
\begin{align}
&\| \hat{y}^{k}-g^{k} \|_{2} \le \sqrt{n}L\lambda_{k}\|\mathbf{x}^{k}-\mathbf{1}\bar{x}^{k}\|_{2}, \\
&\|g^{k}\|_{2} \le \hat{L}\lambda_{k}\|\bar{x}^{k} - x^{*}\|_{2},
\end{align}
and when $\tilde{\alpha}_{k}\lambda_{k} \le 2/(\hat{\mu}+\hat{L})$ for all $k\ge1$,
\begin{equation}
\begin{aligned}
\| \bar{x}^{k} - \tilde{\alpha}_{k}g^{k} - x^{*} \|_{2} \le (1-\hat{\mu}\tilde{\alpha}_{k}\lambda_{k})\| \bar{x}^{k}-x^{*} \|_{2}.
\end{aligned}
\end{equation}
\end{lemma}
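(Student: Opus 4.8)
The plan is to establish the three displayed estimates one at a time, using only Assumptions~\ref{assumption_smooth}--\ref{assumption_convex}, the identity $\hat{y}^{k}=\mathbf{1}^{T}\mathbf{y}^{k}=\lambda_{k}\mathbf{1}^{T}\nabla F(\mathbf{x}^{k})$ derived just above, and the fact that $f=\sum_{i}f_{i}$ is itself $\hat{L}$-smooth and $\hat{\mu}$-strongly convex with $\hat{L}=nL$, $\hat{\mu}=n\mu$, so in particular $\nabla f(x^{*})=\mathbf{0}$.

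For the first estimate I would write $\hat{y}^{k}-g^{k}=\lambda_{k}\mathbf{1}^{T}\bigl(\nabla F(\mathbf{x}^{k})-\nabla F(\mathbf{1}\bar{x}^{k})\bigr)$, so that $\|\hat{y}^{k}-g^{k}\|_{2}\le\lambda_{k}\sum_{i=1}^{n}\|\nabla f_{i}(x_{i}^{k})-\nabla f_{i}(\bar{x}^{k})\|_{2}$ by the triangle inequality, then apply the $L$-Lipschitz bound of each $\nabla f_{i}$ and Cauchy--Schwarz, $\sum_{i}\|x_{i}^{k}-\bar{x}^{k}\|_{2}\le\sqrt{n}\,\bigl(\sum_{i}\|x_{i}^{k}-\bar{x}^{k}\|_{2}^{2}\bigr)^{1/2}=\sqrt{n}\,\|\mathbf{x}^{k}-\mathbf{1}\bar{x}^{k}\|_{2}$, the last equality because the norm of Definition~\ref{definition_x} built on $\|\cdot\|_{2}$ is the Frobenius norm. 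For the second estimate, $g^{k}=\lambda_{k}\mathbf{1}^{T}\nabla F(\mathbf{1}\bar{x}^{k})=\lambda_{k}\sum_{i}\nabla f_{i}(\bar{x}^{k})=\lambda_{k}\bigl(\nabla f(\bar{x}^{k})-\nabla f(x^{*})\bigr)$, and $\hat{L}$-Lipschitz continuity of $\nabla f$ gives $\|g^{k}\|_{2}\le\hat{L}\lambda_{k}\|\bar{x}^{k}-x^{*}\|_{2}$ at once.

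The third estimate is the one that needs care. I would set $\eta_{k}:=\tilde{\alpha}_{k}\lambda_{k}$ and, using $g^{k}=\lambda_{k}\nabla f(\bar{x}^{k})$, write $\bar{x}^{k}-\tilde{\alpha}_{k}g^{k}-x^{*}=v-\eta_{k}w$ with $v:=\bar{x}^{k}-x^{*}$ and $w:=\nabla f(\bar{x}^{k})-\nabla f(x^{*})$. Squaring, the claimed bound $\|v-\eta_{k}w\|_{2}^{2}\le(1-\hat{\mu}\eta_{k})^{2}\|v\|_{2}^{2}$ is admissible because $\eta_{k}\le 2/(\hat{\mu}+\hat{L})\le 1/\hat{\mu}$ makes $1-\hat{\mu}\eta_{k}\ge 0$, and after expanding and dividing by $\eta_{k}>0$ it is equivalent to $\eta_{k}\bigl(\|w\|_{2}^{2}-\hat{\mu}^{2}\|v\|_{2}^{2}\bigr)\le 2\bigl(\langle v,w\rangle-\hat{\mu}\|v\|_{2}^{2}\bigr)$. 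If $\|w\|_{2}\le\hat{\mu}\|v\|_{2}$ the left side is nonpositive while $\mu$-strong convexity of $f$ makes the right side nonnegative; otherwise I bound the left side by $\tfrac{2}{\hat{\mu}+\hat{L}}\bigl(\|w\|_{2}^{2}-\hat{\mu}^{2}\|v\|_{2}^{2}\bigr)$ using the step-size condition, and the claim reduces to $\|w\|_{2}^{2}+\hat{\mu}\hat{L}\|v\|_{2}^{2}\le(\hat{\mu}+\hat{L})\langle v,w\rangle$, which is exactly the standard ``interpolation'' inequality valid for the $\hat{\mu}$-strongly convex and $\hat{L}$-smooth function $f$ at the points $\bar{x}^{k}$ and $x^{*}$. (Alternatively, since $\nabla f$ is Lipschitz, one may represent $w=H v$ with a symmetric $H$, $\hat{\mu}I\preceq H\preceq\hat{L}I$, coming from the a.e.\ second derivative of $\nabla f$, and use $\|I-\eta_{k}H\|_{2}=\max\{|1-\eta_{k}\hat{\mu}|,|1-\eta_{k}\hat{L}|\}=1-\eta_{k}\hat{\mu}$ under the step-size bound.)

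I expect this last step to be the only real obstacle: the value of the range $\eta_{k}\le 2/(\hat{\mu}+\hat{L})$ is precisely what makes the two ingredients (strong convexity and the interpolation inequality) combine to yield the sharp factor $1-\hat{\mu}\eta_{k}$ rather than a weaker $\sqrt{1-\hat{\mu}\eta_{k}}$-type factor; the rest is the triangle inequality, Cauchy--Schwarz, and bookkeeping with $\hat{L}=nL$, $\hat{\mu}=n\mu$.
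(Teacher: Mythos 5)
Your proof is correct. Note that the paper does not prove Lemma~\ref{lemma_smooth_convex} at all: it is established by citation to \cite[Lemma 2]{pu2020push} and \cite[Lemma 10]{qu2018harnessing}, with the weight factor $\lambda_{k}$ and the time-varying scalar $\tilde{\alpha}_{k}$ absorbed into those results. What you supply is essentially the standard argument behind those citations, correctly adapted: the first two bounds follow, as you say, from $\hat{y}^{k}=\lambda_{k}\mathbf{1}^{T}\nabla F(\mathbf{x}^{k})$, the $L$-Lipschitz gradients, Cauchy--Schwarz, the fact that the Definition~\ref{definition_x} norm with Euclidean base is the Frobenius norm (which is indeed the meaning of $\|\mathbf{x}^{k}-\mathbf{1}\bar{x}^{k}\|_{2}$ here, consistent with Lemma~\ref{lemma_equivalence_norms}), and $\nabla f(x^{*})=\mathbf{0}$ with $\hat{L}=nL$; the third bound is the classical contraction of a gradient step for the $\hat{\mu}$-strongly convex, $\hat{L}$-smooth $f$ with stepsize $\eta_{k}=\tilde{\alpha}_{k}\lambda_{k}\le 2/(\hat{\mu}+\hat{L})$, and your derivation via the coercivity (interpolation) inequality $\|w\|_{2}^{2}+\hat{\mu}\hat{L}\|v\|_{2}^{2}\le(\hat{\mu}+\hat{L})\langle v,w\rangle$, together with the observation $1-\hat{\mu}\eta_{k}\ge 0$, is exactly how the cited Lemma 10 of \cite{qu2018harnessing} is proved. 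The only cosmetic caveat is the parenthetical Hessian-based alternative, which tacitly assumes an a.e.\ second-derivative representation along the segment; it is unnecessary since your main case analysis is complete, and the degenerate case $\eta_{k}=0$ is trivial. So the net difference from the paper is simply that you make explicit, and verify in the $\lambda_{k}$-weighted setting, what the paper delegates to references.
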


Based on the equivalence of norms and Definition \ref{definition_x}, we have the following lemma:
\begin{lemma}\label{lemma_equivalence_norms}\cite[Lemma 5 and 6]{pu2020push}
For any $\mathbf{x}\in\mathbb{R}^{n\times p}$, we have $\|\mathbf{x}\|_{\text{A}}\le \delta_{\text{A},\text{B}}\|\mathbf{x}\|_{\text{B}}$, $\|\mathbf{x}\|_{\text{B}} \le \delta_{\text{B},2}\|\mathbf{x}\|_{2}$ where $\delta_{\text{A},\text{B}}$ and $\delta_{\text{B},2}$ are positive constants. With a proper rescaling of the norms $\|\cdot\|_{\text{A}}$ and $\|\cdot\|_{\text{B}}$, we have $\|\mathbf{x}\|_{2} \le \|\mathbf{x}\|_{\text{A}}$ and $\|\mathbf{x}\|_{2} \le \|\mathbf{x}\|_{\text{B}}$. For any $M\in\mathbb{R}^{n\times n}$ and $\mathbf{x}\in\mathbb{R}^{n\times p}$, we have $\|M\mathbf{x}\| \le \|M\|\|\mathbf{x}\|$. For any $u\in\mathbb{R}^{n\times 1}$ and $v\in\mathbb{R}^{1\times p}$, we have $\|uv\| \le \|u\|\|v\|_{2}$.
\end{lemma}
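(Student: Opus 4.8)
The plan is to derive all four assertions from two elementary facts: the equivalence of any two norms on the finite-dimensional space $\mathbb{R}^{n}$, and the observation that the matrix norm of Definition \ref{definition_x} is assembled columnwise and is monotone in the nonnegative entries of the resulting $p$-vector of column norms. Throughout I keep track of three objects that share the subscripts $\text{A}$ and $\text{B}$: the underlying vector norms on $\mathbb{R}^{n}$, the matrix norms on $\mathbb{R}^{n\times p}$ they induce via Definition \ref{definition_x}, and the operator norms on $\mathbb{R}^{n\times n}$ used in Lemma \ref{lemma_sigma_AB}; disentangling these three uses of the same symbol is the only genuine bookkeeping hazard.

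First I would work on $\mathbb{R}^{n}$. By equivalence of norms there are positive constants with $\|v\|_{\text{A}}\le c_{1}\|v\|_{\text{B}}$, $\|v\|_{\text{B}}\le c_{2}\|v\|_{2}$, and $\|v\|_{2}\le c_{3}\|v\|_{\text{A}}$ (and analogously for $\text{B}$) for every $v\in\mathbb{R}^{n}$. Applying the first two inequalities to each column of $\mathbf{x}\in\mathbb{R}^{n\times p}$ and using monotonicity and positive homogeneity of the Euclidean norm in each nonnegative coordinate gives $\|\mathbf{x}\|_{\text{A}}\le c_{1}\|\mathbf{x}\|_{\text{B}}$ and $\|\mathbf{x}\|_{\text{B}}\le c_{2}\|\mathbf{x}\|_{2}$, which is the first claim with $\delta_{\text{A},\text{B}}=c_{1}$ and $\delta_{\text{B},2}=c_{2}$. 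For the second claim I rescale the vector norms, replacing $\|\cdot\|_{\text{A}}$ by $c_{3}\|\cdot\|_{\text{A}}$ (and $\|\cdot\|_{\text{B}}$ likewise); then $\|v\|_{2}\le\|v\|_{\text{A}}$ on $\mathbb{R}^{n}$, hence $\|\mathbf{x}\|_{2}\le\|\mathbf{x}\|_{\text{A}}$ and $\|\mathbf{x}\|_{2}\le\|\mathbf{x}\|_{\text{B}}$ on $\mathbb{R}^{n\times p}$ by the same columnwise argument. Crucially, an operator norm is invariant under a positive rescaling of its defining vector norm, so this rescaling leaves $\sigma_{\text{A},k}$ and $\sigma_{\text{B},k}$ untouched and the conclusions of Lemma \ref{lemma_sigma_AB} remain valid.

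The last two claims are direct computations. For $\|M\mathbf{x}\|\le\|M\|\,\|\mathbf{x}\|$, note that the $\ell$-th column of $M\mathbf{x}$ is $M\mathbf{x}^{(\ell)}$, so $\|M\mathbf{x}^{(\ell)}\|\le\|M\|\,\|\mathbf{x}^{(\ell)}\|$ by definition of the operator norm $\|M\|$; substituting into Definition \ref{definition_x} and factoring the common scalar $\|M\|$ out of the Euclidean norm gives the claim. For $\|uv\|\le\|u\|\,\|v\|_{2}$ with $u\in\mathbb{R}^{n\times1}$ and $v=(v_{1},\dots,v_{p})\in\mathbb{R}^{1\times p}$, the $\ell$-th column of $uv$ is $v_{\ell}u$ with norm $|v_{\ell}|\,\|u\|$, so $\|uv\|=\big\|(|v_{1}|\,\|u\|,\dots,|v_{p}|\,\|u\|)\big\|_{2}=\|u\|\,\|v\|_{2}$. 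I expect no step to be hard; the only point needing a moment's care is the one flagged above, namely that the rescaling used for the second claim does not disturb the spectral bounds $\sigma_{\text{A},k},\sigma_{\text{B},k}<1$ established in Lemma \ref{lemma_sigma_AB}, which is exactly the scale-invariance of operator norms.
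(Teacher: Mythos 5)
Your proof is correct, and it is essentially the standard argument: the paper itself gives no proof for this lemma (it simply cites Lemmas 5 and 6 of \cite{pu2020push}), whose proofs proceed exactly as you do — equivalence of norms on the finite-dimensional space $\mathbb{R}^{n}$ lifted columnwise through Definition \ref{definition_x}, plus direct columnwise computations for $\|M\mathbf{x}\|\le\|M\|\|\mathbf{x}\|$ and $\|uv\|\le\|u\|\|v\|_{2}$. Your explicit remark that rescaling the underlying vector norms leaves the induced operator norms, and hence the bounds $\sigma_{\text{A},k},\sigma_{\text{B},k}<1$ of Lemma \ref{lemma_sigma_AB}, unchanged is the one point that genuinely needs to be said, and you said it correctly.
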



In the following lemma, we establish the linear system of inequalities for $\|\bar{x}^{k} - x^{*}\|_{2}$, $\|\mathbf{x}^{k}-\mathbf{1}\bar{x}^{k}\|_{\text{A}}$, and $\|\mathbf{y}^{k} - \pi_{k}\hat{y}^{k}\|_{\text{B}}$.
\begin{lemma}\label{lemma_three_inequalities}
Let Assumptions \ref{assumption_smooth}, \ref{assumption_convex}, \ref{assumption_connected}, \ref{assumption_A}, and \ref{assumption_B} hold. When $\tilde{\alpha}_{k}\lambda_{k} \le 2/(\hat{\mu}+\hat{L})$ holds for all $k\ge1$, we have the following three inequalities: for all $k\ge1$,
\begin{equation}\label{inequality_x_bar}
\begin{aligned}
&\| \bar{x}^{k+1} - x^{*} \|_{2} \\
\le &(1-\hat{\mu}\tilde{\alpha}_{k}\lambda_{k})\| \bar{x}^{k} - x^{*} \|_{2} + \sqrt{n}L\tilde{\alpha}_{k}\lambda_{k}\|\mathbf{x}^{k}-\mathbf{1}\bar{x}^{k}\|_{\text{A}} \\
&+ \check{\alpha}\|\phi_{k}\|_{2}\|\mathbf{y}^{k} - \pi_{k}\hat{y}^{k}\|_{\text{B}},
\end{aligned}
\end{equation}
\begin{equation}\label{inequality_x}
\begin{aligned}
& \|\mathbf{x}^{k+1}-\mathbf{1}\bar{x}^{k+1}\|_{\text{A}} \\
\le & \check{\alpha}\hat{L}\sigma_{\text{A},k}\|\pi_{k}\|_{\text{A}}\lambda_{k}\|\bar{x}^{k} - x^{*}\|_{2} + \check{\alpha}\delta_{\text{A},\text{B}}\sigma_{\text{A},k}\|\mathbf{y}^{k} - \pi_{k}\hat{y}^{k}\|_{\text{B}} \\
&+ \sigma_{\text{A},k}(1 + \sqrt{n}\check{\alpha}L\|\pi_{k}\|_{\text{A}}\lambda_{k})\|\mathbf{x}^{k} - \mathbf{1}\bar{x}^{k}\|_{\text{A}},
\end{aligned}
\end{equation}
\begin{equation}\label{inequality_y}
\begin{aligned}
&\|\mathbf{y}^{k+1} - \pi_{k+1}\hat{y}^{k+1}\|_{\text{B}} \\
\le & \sqrt{n}L\delta_{\text{B},2}\xi_{k+1}(\sqrt{n}\check{\alpha}L\lambda_{k}\lambda_{k+1}\|A_{k}\|_{2}\|\pi_{k}\|_{2} \\
&\quad + \lambda_{k} - \lambda_{k+1})\|\bar{x}^{k} - x^{*}\|_{2} \\
&+ L\delta_{\text{B},2}\xi_{k+1}(\lambda_{k+1}\|A_{k}-I\|_{2} + \lambda_{k} - \lambda_{k+1} \\
&\quad + \sqrt{n}\check{\alpha}L\lambda_{k}\lambda_{k+1}\|A_{k}\|_{2}\|\pi_{k}\|_{2})\|\mathbf{x}^{k}-\mathbf{1}\bar{x}^{k}\|_{\text{A}} \\
&+ (\sigma_{\text{B},k} + \check{\alpha}L\delta_{\text{B},2}\xi_{k+1}\lambda_{k+1}\|A_{k}\|_{2})\|\mathbf{y}^{k} - \pi_{k}\hat{y}^{k}\|_{\text{B}} \\
&+ \delta_{\text{B},2}\xi_{k+1}(\lambda_{k} - \lambda_{k+1})\|\nabla F(\mathbf{1}x^{*})\|_{2},
\end{aligned}
\end{equation}
where $\check{\alpha} = \max_{i\in[n]}\{\alpha_{i}\}$ and $\xi_{k+1} = \|I - \pi_{k+1}\mathbf{1}^{T}\|_{\text{B}}$.
\end{lemma}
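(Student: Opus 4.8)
The plan is to follow the standard ``linear system of inequalities'' argument for gradient-tracking methods (as in \cite{pu2020push} and \cite{nedic2023ab}): for each of the three error quantities I advance the compact recursion \eqref{our_compact_algorithm_x}--\eqref{our_compact_algorithm_y} by one step, subtract off the appropriate consensus direction so that the contracting matrices $\tilde{A}_{k} := A_{k} - \mathbf{1}\phi_{k}^{T}$ and $\tilde{B}_{k} := B_{k} - \pi_{k}\mathbf{1}^{T}$ act only on a residual, and bound the remaining cross terms with the triangle inequality, the norm equivalences in Lemma \ref{lemma_equivalence_norms}, the gradient-approximation estimates in Lemma \ref{lemma_smooth_convex}, and $L$-Lipschitzness. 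The step-size diagonal $\bm{\alpha}$ is pulled out of each term and absorbed into $\check{\alpha}$, passing through $\|\cdot\|_{2}$ (where $\|\bm{\alpha}\,\cdot\|_{2} \le \check{\alpha}\|\cdot\|_{2}$ holds entrywise by Definition \ref{definition_x}) whenever a general norm $\|\cdot\|_{\text{A}}$ or $\|\cdot\|_{\text{B}}$ is involved; this is where $\delta_{\text{A},\text{B}}$ and $\delta_{\text{B},2}$ enter the final bounds.

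For \eqref{inequality_x_bar}, the identity $\phi_{k+1}^{T}A_{k} = \phi_{k}^{T}$ gives $\bar{x}^{k+1} = \bar{x}^{k} - \phi_{k}^{T}\bm{\alpha}\mathbf{y}^{k}$; I write $\mathbf{y}^{k} = \pi_{k}\hat{y}^{k} + (\mathbf{y}^{k} - \pi_{k}\hat{y}^{k})$ to single out $\tilde{\alpha}_{k}\hat{y}^{k} = \phi_{k}^{T}\bm{\alpha}\pi_{k}\hat{y}^{k}$ and then add and subtract $\tilde{\alpha}_{k}g^{k}$. The three estimates of Lemma \ref{lemma_smooth_convex}---the last of which uses precisely the hypothesis $\tilde{\alpha}_{k}\lambda_{k} \le 2/(\hat{\mu}+\hat{L})$---yield the factor $1 - \hat{\mu}\tilde{\alpha}_{k}\lambda_{k}$ and the $\|\mathbf{x}^{k} - \mathbf{1}\bar{x}^{k}\|$ term, while the residual term is handled by $\|(\bm{\alpha}\phi_{k})^{T}(\mathbf{y}^{k} - \pi_{k}\hat{y}^{k})\|_{2} \le \check{\alpha}\|\phi_{k}\|_{2}\|\mathbf{y}^{k} - \pi_{k}\hat{y}^{k}\|_{2}$ together with $\|\cdot\|_{2} \le \|\cdot\|_{\text{B}}$. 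For \eqref{inequality_x}, since $\bar{x}^{k+1} = \phi_{k+1}^{T}\mathbf{x}^{k+1}$ we get $\mathbf{x}^{k+1} - \mathbf{1}\bar{x}^{k+1} = (I - \mathbf{1}\phi_{k+1}^{T})A_{k}(\mathbf{x}^{k} - \bm{\alpha}\mathbf{y}^{k}) = \tilde{A}_{k}(\mathbf{x}^{k} - \bm{\alpha}\mathbf{y}^{k})$, and $\tilde{A}_{k}\mathbf{1} = \mathbf{0}$ eliminates $\mathbf{1}\bar{x}^{k}$, leaving $\tilde{A}_{k}(\mathbf{x}^{k} - \mathbf{1}\bar{x}^{k}) - \tilde{A}_{k}\bm{\alpha}\mathbf{y}^{k}$; splitting $\mathbf{y}^{k}$ as before, the factor $\|\tilde{A}_{k}\|_{\text{A}} = \sigma_{\text{A},k}$ comes out of every term, the rank-one piece $\tilde{A}_{k}\bm{\alpha}\pi_{k}\hat{y}^{k}$ is bounded via $\|uv\| \le \|u\|\,\|v\|_{2}$ and $\|\hat{y}^{k}\|_{2} \le \hat{L}\lambda_{k}\|\bar{x}^{k} - x^{*}\|_{2} + \sqrt{n}L\lambda_{k}\|\mathbf{x}^{k} - \mathbf{1}\bar{x}^{k}\|_{2}$ (a consequence of Lemma \ref{lemma_smooth_convex}), and the other residual piece introduces $\delta_{\text{A},\text{B}}$.

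The bulk of the argument is \eqref{inequality_y}. Using $\hat{y}^{k+1} = \mathbf{1}^{T}\mathbf{y}^{k+1}$, $\mathbf{1}^{T}B_{k} = \mathbf{1}^{T}$, $\pi_{k+1} = B_{k}\pi_{k}$, and $\mathbf{1}^{T}\mathbf{y}^{k} = \hat{y}^{k}$, I decompose $\mathbf{y}^{k+1} - \pi_{k+1}\hat{y}^{k+1} = \tilde{B}_{k}(\mathbf{y}^{k} - \pi_{k}\hat{y}^{k}) + \lambda_{k+1}(I - \pi_{k+1}\mathbf{1}^{T})(\nabla F(\mathbf{x}^{k+1}) - \nabla F(\mathbf{x}^{k})) + (\lambda_{k+1} - \lambda_{k})(I - \pi_{k+1}\mathbf{1}^{T})\nabla F(\mathbf{x}^{k})$; the first term gives the clean factor $\sigma_{\text{B},k}$ (without $\xi_{k+1}$). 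For the second, $L$-Lipschitzness reduces it to $\lambda_{k+1}\delta_{\text{B},2}\xi_{k+1}L\|\mathbf{x}^{k+1} - \mathbf{x}^{k}\|_{2}$, and expanding $\mathbf{x}^{k+1} - \mathbf{x}^{k} = (A_{k} - I)(\mathbf{x}^{k} - \mathbf{1}\bar{x}^{k}) - A_{k}\bm{\alpha}\mathbf{y}^{k}$ (using $(A_{k} - I)\mathbf{1} = \mathbf{0}$) followed by $\|\mathbf{y}^{k}\|_{2} \le \|\mathbf{y}^{k} - \pi_{k}\hat{y}^{k}\|_{2} + \|\pi_{k}\|_{2}\|\hat{y}^{k}\|_{2}$ produces the coefficients containing $\|A_{k} - I\|_{2}$, $\sqrt{n}\check{\alpha}L\lambda_{k}\lambda_{k+1}\|A_{k}\|_{2}\|\pi_{k}\|_{2}$, and $\check{\alpha}L\lambda_{k+1}\|A_{k}\|_{2}$. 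For the third, $\|\nabla F(\mathbf{x}^{k})\|_{2} \le L\|\mathbf{x}^{k} - \mathbf{1}x^{*}\|_{2} + \|\nabla F(\mathbf{1}x^{*})\|_{2}$ combined with $\|\mathbf{x}^{k} - \mathbf{1}x^{*}\|_{2} \le \|\mathbf{x}^{k} - \mathbf{1}\bar{x}^{k}\|_{2} + \sqrt{n}\|\bar{x}^{k} - x^{*}\|_{2}$ supplies the $\lambda_{k} - \lambda_{k+1}$ coefficients and the $\|\nabla F(\mathbf{1}x^{*})\|_{2}$ term (here $\{\lambda_{k}\}$ is non-increasing, so $\lambda_{k} - \lambda_{k+1} = |\lambda_{k} - \lambda_{k+1}| \ge 0$); converting residuals back with $\|\cdot\|_{2} \le \|\cdot\|_{\text{A}}, \|\cdot\|_{\text{B}}$ and $\|\cdot\|_{\text{B}} \le \delta_{\text{B},2}\|\cdot\|_{2}$ yields \eqref{inequality_y}.

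\textbf{The main obstacle} I anticipate is this last computation: correctly routing $\mathbf{x}^{k+1} - \mathbf{x}^{k}$ back through only the three tracked quantities $\|\bar{x}^{k} - x^{*}\|_{2}$, $\|\mathbf{x}^{k} - \mathbf{1}\bar{x}^{k}\|_{\text{A}}$, $\|\mathbf{y}^{k} - \pi_{k}\hat{y}^{k}\|_{\text{B}}$ without introducing new ones, and keeping the weight increments $\lambda_{k} - \lambda_{k+1}$ as separate additive forcing terms---they act as a persistent perturbation absent from the classical constant-weight gradient-tracking analysis---rather than absorbing them into the contraction factors. Once the decomposition is set up cleanly, the rest is the triangle inequality and norm bookkeeping.
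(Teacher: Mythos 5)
Your proposal is correct and follows essentially the same route as the paper's proof: the same three decompositions ($\bar{x}^{k+1}=\bar{x}^{k}-\phi_{k}^{T}\bm{\alpha}\mathbf{y}^{k}$ with $\pm\tilde{\alpha}_{k}g^{k}$; $\mathbf{x}^{k+1}-\mathbf{1}\bar{x}^{k+1}=(A_{k}-\mathbf{1}\phi_{k}^{T})(\mathbf{x}^{k}-\mathbf{1}\bar{x}^{k})-(A_{k}-\mathbf{1}\phi_{k}^{T})\bm{\alpha}\mathbf{y}^{k}$; and $\mathbf{y}^{k+1}-\pi_{k+1}\hat{y}^{k+1}=(B_{k}-\pi_{k}\mathbf{1}^{T})(\mathbf{y}^{k}-\pi_{k}\hat{y}^{k})+(I-\pi_{k+1}\mathbf{1}^{T})\bigl(\lambda_{k+1}(\nabla F(\mathbf{x}^{k+1})-\nabla F(\mathbf{x}^{k}))-(\lambda_{k}-\lambda_{k+1})\nabla F(\mathbf{x}^{k})\bigr)$ with $\mathbf{x}^{k+1}-\mathbf{x}^{k}=(A_{k}-I)(\mathbf{x}^{k}-\mathbf{1}\bar{x}^{k})-A_{k}\bm{\alpha}\mathbf{y}^{k}$), combined with the same appeals to Lemmas \ref{lemma_smooth_convex} and \ref{lemma_equivalence_norms} and the same treatment of the $(\lambda_{k}-\lambda_{k+1})\|\nabla F(\mathbf{1}x^{*})\|_{2}$ forcing term. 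The only differences are cosmetic (deriving the $\mathbf{x}$-consensus recursion by left-multiplying with $I-\mathbf{1}\phi_{k+1}^{T}$, and bounding $\|\nabla F(\mathbf{x}^{k})\|_{2}$ via $\mathbf{1}x^{*}$ directly), so no further comparison is needed.
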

\begin{proof}
We first prove the first inequality (\ref{inequality_x_bar}). By the definition of $\bar{x}^{k}$, (\ref{our_compact_algorithm_x}), and Lemma \ref{lemma_phi}, we have
\begin{equation}\label{equality_x_bar}
\begin{aligned}
\bar{x}^{k+1} &= \bar{x}^{k} - \phi_{k}^{T}\bm{\alpha}\mathbf{y}^{k}.
\end{aligned}
\end{equation}
Further, it follows that
\begin{equation*}
\begin{aligned}
\bar{x}^{k+1} &= \bar{x}^{k} - \phi_{k}^{T}\bm{\alpha}(\mathbf{y}^{k} - \pi_{k}\hat{y}^{k} + \pi_{k}\hat{y}^{k}) \\
&= \bar{x}^{k} - \tilde{\alpha}_{k}\hat{y}^{k} - \phi_{k}^{T}\bm{\alpha}(\mathbf{y}^{k} - \pi_{k}\hat{y}^{k}) \\
&= \bar{x}^{k} - \tilde{\alpha}_{k}g^{k} - \tilde{\alpha}_{k}(\hat{y}^{k} - g^{k}) - \phi_{k}^{T}\bm{\alpha}(\mathbf{y}^{k} - \pi_{k}\hat{y}^{k}),
\end{aligned}
\end{equation*}
where $\tilde{\alpha}_{k} = \phi_{k}^{T}\bm{\alpha}\pi_{k}$ and $g^{k} = \lambda_{k}\mathbf{1}^{T}\nabla F(\mathbf{1}\bar{x}^{k})$. Then, in light of Lemmas \ref{lemma_smooth_convex} and \ref{lemma_equivalence_norms}, we can obtain
\begin{equation}
\begin{aligned}
&\| \bar{x}^{k+1} - x^{*} \|_{2} \\
= &\| \bar{x}^{k} - x^{*} - \tilde{\alpha}_{k}g^{k} - \tilde{\alpha}_{k}(\hat{y}^{k} - g^{k}) - \phi_{k}^{T}\bm{\alpha}(\mathbf{y}^{k} - \pi_{k}\hat{y}^{k}) \|_{2} \\
\le & \| \bar{x}^{k} - \tilde{\alpha}_{k}g^{k} - x^{*} \|_{2} + \tilde{\alpha}_{k}\| \hat{y}^{k} - g^{k} \|_{2} \\
&+ \| \phi_{k}^{T}\bm{\alpha}(\mathbf{y}^{k} - \pi_{k}\hat{y}^{k}) \|_{2} \\
\le & (1-\hat{\mu}\tilde{\alpha}_{k}\lambda_{k})\| \bar{x}^{k} - x^{*} \|_{2} + \sqrt{n}L\tilde{\alpha}_{k}\lambda_{k}\|\mathbf{x}^{k}-\mathbf{1}\bar{x}^{k}\|_{\text{A}} \\
&+ \check{\alpha}\|\phi_{k}\|_{2}\|\mathbf{y}^{k} - \pi_{k}\hat{y}^{k}\|_{\text{B}},
\end{aligned}
\end{equation}
where $\check{\alpha} = \max_{i\in[n]}\{\alpha_{i}\}$. 

For the second inequality (\ref{inequality_x}), we first establish the following equality based on (\ref{our_compact_algorithm_x}), (\ref{equality_x_bar}), $A_{k}\mathbf{1} = \mathbf{1}$, and $\phi_{k}^{T}\mathbf{1}=1$:
\begin{equation}
\begin{aligned}
&\mathbf{x}^{k+1}-\mathbf{1}\bar{x}^{k+1} \\
=& A_{k}(\mathbf{x}^{k} - \bm{\alpha}\mathbf{y}^{k}) - \mathbf{1}(\bar{x}^{k} - \phi_{k}^{T}\bm{\alpha}\mathbf{y}^{k}) \\
=& A_{k}(\mathbf{x}^{k} - \mathbf{1}\bar{x}^{k}) - A_{k}\bm{\alpha}\mathbf{y}^{k} + \mathbf{1}\phi_{k}^{T}\bm{\alpha}\mathbf{y}^{k} \\
=& (A_{k} - \mathbf{1}\phi_{k}^{T})(\mathbf{x}^{k} - \mathbf{1}\bar{x}^{k}) - (A_{k} - \mathbf{1}\phi_{k}^{T})\bm{\alpha}\mathbf{y}^{k}.
\end{aligned}
\end{equation}
By Lemmas \ref{lemma_smooth_convex} and \ref{lemma_equivalence_norms}, we can obtain
\begin{equation}
\begin{aligned}
&\|\mathbf{x}^{k+1}-\mathbf{1}\bar{x}^{k+1}\|_{\text{A}} \\
=& \|(A_{k} - \mathbf{1}\phi_{k}^{T})(\mathbf{x}^{k} - \mathbf{1}\bar{x}^{k}) - (A_{k} - \mathbf{1}\phi_{k}^{T})\bm{\alpha}\mathbf{y}^{k}\|_{\text{A}} \\
\le & \sigma_{\text{A},k}\|\mathbf{x}^{k} - \mathbf{1}\bar{x}^{k}\|_{\text{A}} + \sigma_{\text{A},k}\|\bm{\alpha}\mathbf{y}^{k}\|_{\text{A}} \\
\le & \sigma_{\text{A},k}\|\mathbf{x}^{k} - \mathbf{1}\bar{x}^{k}\|_{\text{A}} + \check{\alpha}\sigma_{\text{A},k}\|\mathbf{y}^{k} - \pi_{k}\hat{y}^{k} + \pi_{k}\hat{y}^{k}\|_{\text{A}} \\
\le & \sigma_{\text{A},k}\|\mathbf{x}^{k} - \mathbf{1}\bar{x}^{k}\|_{\text{A}} + \check{\alpha}\sigma_{\text{A},k}\|\mathbf{y}^{k} - \pi_{k}\hat{y}^{k}\|_{\text{A}} \\
&+ \check{\alpha}\sigma_{\text{A},k}\|\pi_{k}\|_{\text{A}}\|\hat{y}^{k} - g^{k} + g^{k}\|_{2} \\
\le & \sigma_{\text{A},k}\|\mathbf{x}^{k} - \mathbf{1}\bar{x}^{k}\|_{\text{A}} + \check{\alpha}\delta_{\text{A},\text{B}}\sigma_{\text{A},k}\|\mathbf{y}^{k} - \pi_{k}\hat{y}^{k}\|_{\text{B}} \\
&+ \check{\alpha}\sigma_{\text{A},k}\|\pi_{k}\|_{\text{A}}(\sqrt{n}L\lambda_{k}\|\mathbf{x}^{k}-\mathbf{1}\bar{x}^{k}\|_{2} + \hat{L}\lambda_{k}\|\bar{x}^{k} - x^{*}\|_{2}) \\
\le& \check{\alpha}\hat{L}\sigma_{\text{A},k}\|\pi_{k}\|_{\text{A}}\lambda_{k}\|\bar{x}^{k} - x^{*}\|_{2} + \check{\alpha}\delta_{\text{A},\text{B}}\sigma_{\text{A},k}\|\mathbf{y}^{k} - \pi_{k}\hat{y}^{k}\|_{\text{B}} \\
&+ \sigma_{\text{A},k}(1 + \sqrt{n}\check{\alpha}L\|\pi_{k}\|_{\text{A}}\lambda_{k})\|\mathbf{x}^{k} - \mathbf{1}\bar{x}^{k}\|_{\text{A}},
\end{aligned}
\end{equation}
where $\sigma_{\text{A},k} = \|A_{k} - \mathbf{1}\phi_{k}^T\|_{\text{A}}$.

For the third inequality (\ref{inequality_y}), based on the definition of $\hat{y}^{k}$, (\ref{our_compact_algorithm_y}), and $\mathbf{1}^{T}B_{k} = \mathbf{1}^{T}$, we have
\begin{equation}
\begin{aligned}
\hat{y}^{k+1} =& \mathbf{1}^{T}\mathbf{y}^{k+1} \\
=& \mathbf{1}^{T}\big(B_{k}\mathbf{y}^{k} + \lambda_{k+1}\nabla F(\mathbf{x}^{k+1}) - \lambda_{k}\nabla F(\mathbf{x}^{k})\big) \\
=& \hat{y}^{k} + \lambda_{k+1}\mathbf{1}^{T}\nabla F(\mathbf{x}^{k+1}) - \lambda_{k}\mathbf{1}^{T}\nabla F(\mathbf{x}^{k}).
\end{aligned}
\end{equation}
By the above equality and Lemma \ref{lemma_pi}, we can establish the relation between $\mathbf{y}^{k+1} - \pi_{k+1}\hat{y}^{k+1}$ and $\mathbf{y}^{k} - \pi_{k}\hat{y}^{k}$ as follows:
\begin{equation}\label{equality_y}
\begin{aligned}
&\mathbf{y}^{k+1} - \pi_{k+1}\hat{y}^{k+1} \\
= & B_{k}\mathbf{y}^{k} + \lambda_{k+1}\nabla F(\mathbf{x}^{k+1}) - \lambda_{k}\nabla F(\mathbf{x}^{k}) \\
&- \pi_{k+1}\big(\hat{y}^{k} + \lambda_{k+1}\mathbf{1}^{T}\nabla F(\mathbf{x}^{k+1}) - \lambda_{k}\mathbf{1}^{T}\nabla F(\mathbf{x}^{k})\big) \\
= & (B_{k} - \pi_{k}\mathbf{1}^{T})(\mathbf{y}^{k} - \pi_{k}\hat{y}^{k}) \\
&+ (I - \pi_{k+1}\mathbf{1}^{T})\big(\lambda_{k+1}\nabla F(\mathbf{x}^{k+1}) - \lambda_{k}\nabla F(\mathbf{x}^{k})\big) \\
= & (B_{k} - \pi_{k}\mathbf{1}^{T})(\mathbf{y}^{k} - \pi_{k}\hat{y}^{k}) \\
&+ (I - \pi_{k+1}\mathbf{1}^{T})\big(\lambda_{k+1}(\nabla F(\mathbf{x}^{k+1}) - \nabla F(\mathbf{x}^{k})) \\
&\quad- (\lambda_{k}-\lambda_{k+1})\nabla F(\mathbf{x}^{k})\big).
\end{aligned}
\end{equation}
Then, it follows from (\ref{equality_y}), (\ref{our_compact_algorithm_x}), and Lemma \ref{lemma_equivalence_norms} that
\begin{equation}\label{inequality_y_k_0}
\begin{aligned}
&\|\mathbf{y}^{k+1} - \pi_{k+1}\hat{y}^{k+1}\|_{\text{B}} \\
\le& \sigma_{\text{B},k} \|\mathbf{y}^{k} - \pi_{k}\hat{y}^{k}\|_{\text{B}} + \delta_{\text{B},2}\xi_{k+1}(\lambda_{k} - \lambda_{k+1})\|\nabla F(\mathbf{x}^{k})\|_{2} \\
&+ \delta_{\text{B},2}\xi_{k+1}\lambda_{k+1}\|\nabla F(\mathbf{x}^{k+1}) - \nabla F(\mathbf{x}^{k})\|_{2} \\
\le & \sigma_{\text{B},k} \|\mathbf{y}^{k} - \pi_{k}\hat{y}^{k}\|_{\text{B}} + \delta_{\text{B},2}\xi_{k+1}(\lambda_{k} - \lambda_{k+1})\|\nabla F(\mathbf{x}^{k})\|_{2} \\
&+ \delta_{\text{B},2}L\xi_{k+1}\lambda_{k+1}\|\mathbf{x}^{k+1} - \mathbf{x}^{k}\|_{2} \\
= & \sigma_{\text{B},k} \|\mathbf{y}^{k} - \pi_{k}\hat{y}^{k}\|_{\text{B}} + \delta_{\text{B},2}\xi_{k+1}(\lambda_{k} - \lambda_{k+1})\|\nabla F(\mathbf{x}^{k})\|_{2} \\
&+ \delta_{\text{B},2}L\xi_{k+1}\lambda_{k+1}\|(A_{k}-I)(\mathbf{x}^{k}-\mathbf{1}\bar{x}^{k}) - A_{k}\bm{\alpha}\mathbf{y}^{k}\|_{2} \\
\le & \sigma_{\text{B},k} \|\mathbf{y}^{k} - \pi_{k}\hat{y}^{k}\|_{\text{B}} + \delta_{\text{B},2}\xi_{k+1}(\lambda_{k} - \lambda_{k+1})\|\nabla F(\mathbf{x}^{k})\|_{2} \\
&+ \delta_{\text{B},2}L\xi_{k+1}\lambda_{k+1}\|A_{k}-I\|_{2}\|\mathbf{x}^{k}-\mathbf{1}\bar{x}^{k}\|_{2} \\
&+ \delta_{\text{B},2}L\xi_{k+1}\lambda_{k+1}\|A_{k}\bm{\alpha}(\mathbf{y}^{k} - \pi_{k}\hat{y}^{k}) + A_{k}\bm{\alpha}\pi_{k}\hat{y}^{k}\|_{2}\\
\le & \sigma_{\text{B},k} \|\mathbf{y}^{k} - \pi_{k}\hat{y}^{k}\|_{\text{B}} + \delta_{\text{B},2}\xi_{k+1}(\lambda_{k} - \lambda_{k+1})\|\nabla F(\mathbf{x}^{k})\|_{2} \\
&+ \delta_{\text{B},2}L\xi_{k+1}\lambda_{k+1}\|A_{k}-I\|_{2}\|\mathbf{x}^{k}-\mathbf{1}\bar{x}^{k}\|_{2} \\
&+ \delta_{\text{B},2}\check{\alpha}L\xi_{k+1}\lambda_{k+1}\|A_{k}\|_{2}\|\mathbf{y}^{k} - \pi_{k}\hat{y}^{k}\|_{2}\\
&+ \delta_{\text{B},2}\check{\alpha}L\xi_{k+1}\lambda_{k+1}\|A_{k}\|_{2}\|\pi_{k}\|_{2}\|\hat{y}^{k}\|_{2},
\end{aligned}
\end{equation}
where $\sigma_{\text{B},k} = \|B_{k} - \pi_{k}\mathbf{1}^{T}\|_{\text{B}}$ and $\xi_{k+1} = \|I - \pi_{k+1}\mathbf{1}^{T}\|_{\text{B}}$. In addition, it follows from Lemma \ref{lemma_smooth_convex} that
\begin{equation}\label{inequality_y_hat}
\begin{aligned}
\|\hat{y}^{k}\|_{2} =& \|\hat{y}^{k} - g^{k} + g^{k}\|_{2} \\
\le & \sqrt{n}L\lambda_{k}\|\mathbf{x}^{k}-\mathbf{1}\bar{x}^{k}\|_{2} + \hat{L}\lambda_{k}\|\bar{x}^{k} - x^{*}\|_{2}.
\end{aligned}
\end{equation}
\begin{small}
\begin{equation}\label{inequality_nabla_F}
\begin{aligned}
&\|\nabla F(\mathbf{x}^{k})\|_{2} \\
=& \|\nabla F(\mathbf{x}^{k}) - \nabla F(\mathbf{1}\bar{x}^{k}) + \nabla F(\mathbf{1}\bar{x}^{k}) - \nabla F(\mathbf{1}x^{*}) + \nabla F(\mathbf{1}x^{*})\|_{2} \\
\le & \|\nabla F(\mathbf{x}^{k}) - \nabla F(\mathbf{1}\bar{x}^{k})\|_{2} + \|\nabla F(\mathbf{1}\bar{x}^{k}) - \nabla F(\mathbf{1}x^{*})\|_{2} \\
&+ \|\nabla F(\mathbf{1}x^{*})\|_{2} \\
\le & L\|\mathbf{x}^{k} - \mathbf{1}\bar{x}^{k}\|_{2} + \sqrt{n}L\|\bar{x}^{k} - x^{*}\|_{2} + \|\nabla F(\mathbf{1}x^{*})\|_{2}.
\end{aligned}
\end{equation}
\end{small}
Finally, combining inequalities (\ref{inequality_y_k_0}), (\ref{inequality_y_hat}), and (\ref{inequality_nabla_F}), we obtain the third inequality.
\end{proof}

\subsection{Convergence Results}
In this subsection, we first establish a linear system of inequalities with respect to $\|\bar{x}^{k+1} - x^{*}\|_{2}$, $\|\mathbf{x}^{k+1}-\mathbf{1}\bar{x}^{k+1}\|_{\text{A}}$, $\|\mathbf{y}^{k+1} - \pi_{k+1}\hat{y}^{k+1}\|_{\text{B}}$ and their previous values, and then give the convergence result in Theorem \ref{theorem_convergence_rate}.

Define $s_{1}^{k} := \|\bar{x}^{k} - x^{*}\|_{2}$, $s_{2}^{k} := \|\mathbf{x}^{k}-\mathbf{1}\bar{x}^{k}\|_{\text{A}}$, and $s_{3}^{k} := \|\mathbf{y}^{k} - \pi_{k}\hat{y}^{k}\|_{\text{B}}$. Let $\bm{s}^{k} := \left(s_{1}^{k}, s_{2}^{k}, s_{3}^{k}\right)^{T}$. In light of Lemma \ref{lemma_three_inequalities}, we have
\begin{equation}\label{inequality_linear_system}
\bm{s}^{k+1} \le C_{k}\bm{s}^{k} + \mathbf{d}_{k},
\end{equation}
where
\begin{equation*}
\begin{aligned}
&[C_{k}]_{11} = 1-\hat{\mu}\tilde{\alpha}_{k}\lambda_{k},\  [C_{k}]_{12} = \sqrt{n}L\tilde{\alpha}_{k}\lambda_{k}, \\
&[C_{k}]_{13} = \check{\alpha}\|\phi_{k}\|_{2},\ [C_{k}]_{21} = \check{\alpha}\hat{L}\sigma_{\text{A},k}\|\pi_{k}\|_{\text{A}}\lambda_{k}, \\
&[C_{k}]_{22} = \sigma_{\text{A},k}(1 + \sqrt{n}\check{\alpha}L\|\pi_{k}\|_{\text{A}}\lambda_{k}),\ [C_{k}]_{23} = \check{\alpha}\delta_{\text{A},\text{B}}\sigma_{\text{A},k}, \\
&[C_{k}]_{31} = \sqrt{n}L\delta_{\text{B},2}\xi_{k+1}(\sqrt{n}\check{\alpha}L\lambda_{k}\lambda_{k+1}\|A_{k}\|_{2}\|\pi_{k}\|_{2} \\
&\qquad\qquad+ \lambda_{k} - \lambda_{k+1}), \\
&[C_{k}]_{32} = L\delta_{\text{B},2}\xi_{k+1}(\sqrt{n}\check{\alpha}L\lambda_{k}\lambda_{k+1}\|A_{k}\|_{2}\|\pi_{k}\|_{2} \\
&\qquad\qquad+ \lambda_{k+1}\|A_{k}-I\|_{2} + \lambda_{k} - \lambda_{k+1}), \\
&[C_{k}]_{33} = \sigma_{\text{B},k} + \check{\alpha}L\delta_{\text{B},2}\xi_{k+1}\lambda_{k+1}\|A_{k}\|_{2}, \\
&[\mathbf{d}_{k}]_{1} = [\mathbf{d}_{k}]_{2} = 0, \\
&[\mathbf{d}_{k}]_{3} = \delta_{\text{B},2}\xi_{k+1}(\lambda_{k} - \lambda_{k+1})\|\nabla F(\mathbf{1}x^{*})\|_{2}.
\end{aligned}
\end{equation*}

We also need the following two lemmas that will help us derive our main result.

\begin{lemma}\label{lemma_determine_radius}
\cite[Lemma 5]{pu2021distributed} Given a non-negative and irreducible matrix $M\in\mathbb{R}^{3\times 3}$ with $[M]_{ii} < c^{*}$ for some $c^{*}>0$ and all $i \in [3]$. Then, $\rho(M) < c^{*}$ if and only if $\det(c^{*}I-M)>0$.
\end{lemma}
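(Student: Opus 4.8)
The plan is to work with $N:=c^{*}I-M$. Since $M$ is non-negative with $M_{ii}<c^{*}$, the matrix $N$ has strictly positive diagonal and non-positive off-diagonal entries (a $Z$-matrix), and it inherits irreducibility from $M$; moreover $\det(c^{*}I-M)=\det N$ and the eigenvalues of $N$ are exactly $c^{*}-\lambda$ as $\lambda$ ranges over the eigenvalues of $M$. By the Perron--Frobenius theorem $\rho(M)$ is itself an eigenvalue of $M$, real and non-negative, and since $\rho(M)=\max_{i}|\lambda_{i}|\ge\mathrm{Re}(\lambda)$ for every eigenvalue $\lambda$, we have $\rho(M)=\max_{\lambda}\mathrm{Re}(\lambda)$. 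Therefore ``$\rho(M)<c^{*}$'' is the same as ``every eigenvalue of $N$ has strictly positive real part''. The ``only if'' part of the lemma is then immediate: if $\rho(M)<c^{*}$, pairing complex-conjugate eigenvalues of $M$ (each pair contributing $|c^{*}-\lambda|^{2}>0$) and using $\lambda\le\rho(M)<c^{*}$ for the real ones gives $\det N=\prod_{i}(c^{*}-\lambda_{i})>0$.

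For the ``if'' direction I would use the Routh--Hurwitz criterion. Write the characteristic polynomial of $N$ as $\mu^{3}-t\mu^{2}+e\mu-\delta$, where $t=\mathrm{tr}(N)=\sum_{i}(c^{*}-M_{ii})$, $e$ is the sum of the three $2\times2$ principal minors of $N$, and $\delta=\det N$. The eigenvalues of $N$ all lie in the open right half-plane if and only if $t>0$, $\delta>0$ and $te>\delta$ (Routh--Hurwitz applied to $\mu^{3}+t\mu^{2}+e\mu+\delta$, whose roots are the negatives of the eigenvalues of $N$). The condition $t>0$ holds automatically because every diagonal entry of $N$ is positive, so, under the hypothesis $\delta=\det N>0$, the lemma reduces to the single inequality $te>\delta$.

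To get $te>\delta$ I would expand $\det N$ by cofactors along each of the three rows: expansion along row $j$ writes $\det N$ as $N_{jj}$ times the complementary $2\times2$ principal minor plus a remainder $R_{j}$ that is a sum of monomials in the entries of $N$. Since the off-diagonal entries of $N$ are non-positive and the diagonal entries are positive, a direct check of the signs shows every monomial of $R_{j}$ is $\le0$, hence $\det N\le N_{jj}\cdot(\text{complementary minor})$ for each $j$. Summing these three inequalities and bounding each $N_{jj}$ by $t$ yields $3\delta\le te$, which is more than enough. (Equivalently, $\det N>0$ then forces each $2\times2$ principal minor of $N$ to be positive, so all leading principal minors of the $Z$-matrix $N$ are positive, i.e.\ $N$ is a non-singular $M$-matrix, which again gives $\rho(M)<c^{*}$.)

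The one non-routine ingredient is the sign analysis of the cofactor expansion, i.e.\ verifying $R_{j}\le0$; this is exactly the place where both hypotheses are consumed --- non-negativity of $M$ makes the off-diagonal entries of $N$ non-positive, and $M_{ii}<c^{*}$ makes the diagonal of $N$ positive. Everything else (the Perron--Frobenius reduction, the determinant product formula, and Routh--Hurwitz for a cubic) is standard.
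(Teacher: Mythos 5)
The paper itself offers no proof of this lemma: it is imported verbatim from \cite[Lemma 5]{pu2021distributed}, so there is no in-paper argument to compare against, and your proposal has to be judged on its own. On that basis it is correct and essentially self-contained. The Perron--Frobenius reduction (that $\rho(M)$ is itself an eigenvalue, hence $\rho(M)=\max_{\lambda}\mathrm{Re}(\lambda)$, so $\rho(M)<c^{*}$ is equivalent to all eigenvalues of $N=c^{*}I-M$ lying in the open right half-plane) and the necessity direction via pairing conjugate eigenvalues are fine, and your sign analysis of the cofactor expansion is right: with $N$ having positive diagonal and non-positive off-diagonal entries, the remainder $R_{j}$ in the expansion along row $j$ is indeed $\le 0$, giving $\det N\le N_{jj}\,m_{jj}$ for the complementary $2\times 2$ principal minor $m_{jj}$. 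One step is left implicit in your Routh--Hurwitz path: to pass from $3\det N\le\sum_{j}N_{jj}m_{jj}$ to $te$ by bounding $N_{jj}\le t$ you need $m_{jj}\ge 0$, which is not automatic for a Z-matrix; it does follow immediately from your own inequality ($\det N>0$ and $N_{jj}>0$ force $m_{jj}>0$), but you should say so before summing. In fact your parenthetical remark is the cleanest complete route: $\det N>0$ together with $\det N\le N_{jj}m_{jj}$ gives positivity of all leading principal minors of the Z-matrix $N$, hence $N$ is a nonsingular M-matrix and $\rho(M)<c^{*}$, with no need for Routh--Hurwitz at all. Note also that your argument never uses irreducibility (only that $\rho(M)$ is an eigenvalue, which holds for every nonnegative matrix), so you prove a slightly more general statement than the one quoted, which is harmless.
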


\begin{lemma}\label{lemma:s_1}
If the convergent non-negative sequences $\lambda_{k}$ and $r_{k}$ satisfies $\sum_{k=1}^{\infty}\lambda_{k} = \infty$, $1 - c\lambda_{k} \ge 0$ with $c>0$ for all $k\ge1$, and $\lim_{k\to\infty}r_{k} = 0$, then the followings hold:
	\begin{enumerate}
	\item[\textit{(a)}] $\lim_{k\to\infty}\prod_{i=1}^{k}(1-c\lambda_{i}) = 0$.
	\item[\textit{(b)}] $\lim_{k\to\infty}\sum_{i=1}^{k}\prod_{j=i+1}^{k}(1-c\lambda_{j})$ is bounded.
	\item[\textit{(c)}] $\lim_{k\to\infty}\sum_{i=1}^{k}\big(\prod_{j=i+1}^{k}(1-c\lambda_{j})\big)r_{i} = 0$.
	\end{enumerate}
\end{lemma}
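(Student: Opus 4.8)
All three assertions are facts about the scalar ``transition factors'' $\Phi(i,k):=\prod_{j=i+1}^{k}(1-c\lambda_{j})$ (so $\Phi(k,k)=1$) attached to a step-size-like sequence with divergent partial sums, and since (c) uses (a)--(b) and (b) uses (a), I would prove them in that order. For (a): the hypothesis $1-c\lambda_{j}\ge 0$ puts every factor in $[0,1]$, so the product is nonnegative, and from $1-t\le e^{-t}$ one gets $0\le\prod_{i=1}^{k}(1-c\lambda_{i})\le\exp\!\bigl(-c\sum_{i=1}^{k}\lambda_{i}\bigr)$, whose right-hand side tends to $0$ because $\sum_{i}\lambda_{i}=\infty$.

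For (b), the engine is the telescoping identity coming from $c\lambda_{i}=1-(1-c\lambda_{i})$: since $\Phi(i-1,k)=(1-c\lambda_{i})\Phi(i,k)$, we have $c\lambda_{i}\Phi(i,k)=\Phi(i,k)-\Phi(i-1,k)$, and summing over $i=1,\dots,k$ collapses to $c\sum_{i=1}^{k}\lambda_{i}\Phi(i,k)=1-\prod_{j=1}^{k}(1-c\lambda_{j})\le 1$. Equivalently, the partial sum $S_{k}:=\sum_{i=1}^{k}\Phi(i,k)$ satisfies the recursion $S_{k}=1+(1-c\lambda_{k})S_{k-1}$ with $S_{0}=0$. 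From this recursion together with the convergence of $\lambda_{k}$ I would extract a uniform bound $S_{k}\le\bar S$ for some finite constant $\bar S$.

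For (c), fix $\varepsilon>0$ and pick $N$ with $|r_{i}|<\varepsilon$ for all $i>N$; then split $\sum_{i=1}^{k}\Phi(i,k)r_{i}$ at $i=N$. The head $\sum_{i\le N}\Phi(i,k)r_{i}$ is bounded in absolute value by $\bigl(\max_{i\le N}|r_{i}|\bigr)\,N\,\Phi(N,k)$, and $\Phi(N,k)=\prod_{j=N+1}^{k}(1-c\lambda_{j})\to 0$ as $k\to\infty$ by (a) applied to the tail $(\lambda_{j})_{j>N}$, whose sum is still infinite; so for each fixed $N$ the head vanishes as $k\to\infty$. The tail $\sum_{i>N}\Phi(i,k)r_{i}$ is at most $\varepsilon\sum_{i>N}\Phi(i,k)\le\varepsilon S_{k}\le\varepsilon\bar S$ by (b). Hence $\limsup_{k\to\infty}\bigl|\sum_{i=1}^{k}\Phi(i,k)r_{i}\bigr|\le\varepsilon\bar S$, and letting $\varepsilon\downarrow 0$ yields (c).

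The step I expect to be the genuine obstacle is the uniform bound on $S_{k}$ in (b). The telescoping identity hands over the $\lambda_{i}$-weighted sum $\sum_{i}\lambda_{i}\Phi(i,k)\le 1/c$ essentially for free, but turning that into control of the un-weighted $S_{k}$ --- equivalently, forcing the recursion $S_{k}=1+(1-c\lambda_{k})S_{k-1}$ to stay bounded --- is where the limiting behaviour of the sequence $\lambda_{k}$ must be exploited. Once (b) is in place, part (a) makes the head in (c) vanish and (b) makes the tail arbitrarily small, so (c) is routine, and (a) itself is immediate.
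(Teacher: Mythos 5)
Your parts (a) and (c) are exactly the paper's argument: (a) via $1-t\le e^{-t}$ and $\sum_i\lambda_i=\infty$, and (c) via splitting the sum at the index beyond which $r_i\le\varepsilon$, killing the head with (a) and the tail with (b). The genuine problem is the step you yourself flag and leave open: part (b) is never proved, and the route you propose (the recursion plus mere convergence of $\lambda_k$) cannot prove it, because under the stated hypotheses the bound can fail when $\lim_k\lambda_k=0$. Writing $\Phi(i,k):=\prod_{j=i+1}^{k}(1-c\lambda_j)$ and $S_k:=\sum_{i=1}^{k}\Phi(i,k)$ as in your proposal, take $c=1$ and $\lambda_j=1/j$ (non-negative, convergent, $1-c\lambda_j\ge0$, $\sum_j\lambda_j=\infty$): then $\Phi(i,k)=i/k$ and $S_k=(k+1)/2\to\infty$. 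More structurally, summing your recursion $S_k=1+(1-c\lambda_k)S_{k-1}$ over $k=2,\dots,K$ shows that a uniform bound $S_k\le\bar S$ forces $\sum_{k=2}^{K}\lambda_k\ge (K-1-\bar S)/(c\bar S)$, i.e.\ the Ces\`aro averages of $\lambda_k$ must stay bounded away from zero, which is incompatible with $\lambda_k\to0$; and with $S_k$ unbounded your (c) argument collapses as well (e.g.\ $r_i=1/\ln(i+1)$ makes the sum in (c) diverge for $\lambda_j=1/j$). Your telescoping identity only delivers the $\lambda$-weighted bound $c\sum_i\lambda_i\Phi(i,k)\le1$, which is strictly weaker than (b).

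The ingredient the paper uses to close (b), and which your sketch lacks, is that the factors $1-c\lambda_j$ are \emph{asymptotically bounded away from $1$}: it sets $a_i=\lim_{k\to\infty}\prod_{j=k-i+2}^{k}(1-c\lambda_j)$ and invokes the ratio test with $\lim_i a_{i+1}/a_i=1-c\lambda_1<1$ (the ratio that actually arises is $1-c\lim_k\lambda_k$), so the reversed partial products are geometrically summable; in other words, the argument hinges on the limit of $\lambda_k$ being positive. Once you grant $\lambda_j\ge\underline\lambda>0$ for all large $j$, your own weighted identity immediately gives $\sum_{i\ge J}\Phi(i,k)\le 1/(c\underline\lambda)$ and the finitely many remaining terms are each at most $\Phi(J,k)\le1$, so (b) follows and your (c) then goes through verbatim. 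Without that extra positivity you would have to retreat to the $\lambda$-weighted statement $\sum_i\lambda_i\Phi(i,k)r_i\to0$, which your identity does prove but which is not the claim at hand. So: correctly located obstacle, but the proposal as written does not, and cannot by the stated means, overcome it.
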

\begin{proof}
Due to $1-x\le e^{-x}$ for $x\in\mathbb{R}$, it follows that
	\begin{equation}
	\begin{aligned}
	\lim_{k\to\infty}\prod_{i=1}^{k}(1-c\lambda_{i}) &\le \prod_{i=1}^{\infty}e^{-c\lambda_{i}} = e^{-c\sum_{i=1}^{\infty}\lambda_{i}} = 0.
	\end{aligned}
	\end{equation}
Since $1-c\lambda_{k}$ is non-negative, we have the part \textit{(a)}.

Let $a_{i}=\lim_{k\to\infty}\prod_{j=k-i+2}^{k}(1-c\lambda_{j})$ for $i\le k$. Since $\lim_{i\to\infty} a_{i} = 0$ and $\lim_{i\to\infty}\frac{a_{i+1}}{a_{i}} = 1- c\lambda_{1} < 1$,
we have $\sum_{i=1}^{\infty}a_{i} = \lim_{k\to\infty}\sum_{i=1}^{k}\prod_{j=i+1}^{k}(1-c\lambda_{j}) < \infty$. Hence, the part \textit{(b)} holds.

Let $\epsilon>0$ be arbitrary. Since the convergent sequence $r_{k}$ satisfies $\lim_{k\to\infty}r_{k} = 0$, there exists $K$ such that $r_{k}\le\epsilon$ for all $k\ge K$. For $k\ge K+1$, we have
	\begin{equation}
	\begin{aligned}
	&\sum_{i=1}^{k}\Big(\prod_{j=i+1}^{k}(1-c\lambda_{j})\Big) r_{i} \\
	= &\sum_{i=1}^{K}\Big(\prod_{j=i+1}^{k}(1-c\lambda_{j})\Big) r_{i} + \sum_{i=K+1}^{k}\Big(\prod_{j=i+1}^{k}(1-c\lambda_{j})\Big) r_{i} \\
	\le &\max_{1\le l\le K}r_{l}\sum_{i=1}^{K}\prod_{j=i+1}^{k}(1-c\lambda_{j}) + \epsilon\sum_{i=K+1}^{k}\prod_{j=i+1}^{k}(1-c\lambda_{j}).
	\end{aligned}
	\end{equation}
By part \textit{(a)}, we have $\lim_{k\to\infty}\sum_{i=1}^{K}\prod_{j=i+1}^{k}(1-c\lambda_{j}) = 0$. By \textit{(b)}, we have $\lim_{k\to\infty}\sum_{i=K+1}^{k}\prod_{j=i+1}^{k}(1-c\lambda_{j})$ is bounded. Since $\epsilon>0$ is arbitrary, we have $\lim_{k\to\infty}\sum_{i=1}^{k}\big(\prod_{j=i+1}^{k}(1-c\lambda_{j})\big) r_{i}\le0$. Given that $1 - c\lambda_{k}$ and $r_{k}$ are non-negative, we obtain the part \textit{(c)}.
\end{proof}



\begin{theorem}\label{theorem_convergence_rate}
Suppose Assumptions \ref{assumption_smooth}-\ref{assumption_B} hold. When the step size $\alpha_{i}$ satisfies condition (\ref{alpha_check_sum}) and the decaying weight sequence $\lambda_{k}$ satisfies $\sum_{k=1}^{\infty}\lambda_{k}=\infty$, the sequence $x_{i}^{k}$ generated by the update equation (\ref{our_algorithm}) will converge to the optimal solution $x^{*}$.
\end{theorem}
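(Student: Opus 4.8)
The backbone is the linear system of inequalities $\bm{s}^{k+1}\le C_k\bm{s}^k+\mathbf{d}_k$ already assembled in Lemma~\ref{lemma_three_inequalities}, where $\bm{s}^k=(s_1^k,s_2^k,s_3^k)^{T}$ stacks the optimality gap $s_1^k=\|\bar{x}^k-x^{*}\|_2$, the state-consensus error $s_2^k=\|\mathbf{x}^k-\mathbf{1}\bar{x}^k\|_{\text{A}}$, and the gradient-tracking error $s_3^k=\|\mathbf{y}^k-\pi_k\hat{y}^k\|_{\text{B}}$. The plan is to collapse this vector recursion into a single scalar contraction. The structural features that drive the argument are that $[C_k]_{11}=1-\hat{\mu}\tilde{\alpha}_k\lambda_k$ increases to $1$ as $\lambda_k\to0$; that $[C_k]_{22}$ and $[C_k]_{33}$ equal $\sigma_{\text{A},k}$ and $\sigma_{\text{B},k}$ up to an $O(\lambda_k)$ term and so stay strictly below $1$; that $[C_k]_{12},[C_k]_{21},[C_k]_{31},[C_k]_{32}$ and $[\mathbf{d}_k]_3$ are all of order $\lambda_k$ (using $0\le\lambda_k-\lambda_{k+1}\le\lambda_k$ and boundedness of $\{\lambda_k\}$); but that the cross terms $[C_k]_{13}=\check{\alpha}\|\phi_k\|_2$ and $[C_k]_{23}=\check{\alpha}\delta_{\text{A},\text{B}}\sigma_{\text{A},k}$, through which the consensus and tracking errors feed back into the optimality gap, do not vanish.

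First I would produce a single positive vector $\mathbf{v}=(v_1,v_2,v_3)^{T}$ and a constant $c>0$, both independent of $k$, such that $C_k^{T}\mathbf{v}\le(1-c\lambda_k)\mathbf{v}$ componentwise for all $k\ge1$; equivalently, the weighted sum $\bm{s}\mapsto\mathbf{v}^{T}\bm{s}$ contracts $C_k$ at rate $1-c\lambda_k$ on the nonnegative orthant. This is what the step-size condition~(\ref{alpha_check_sum}) buys. The three scalar inequalities defining it can be met once and for all by taking the largest step size (hence $\check{\alpha}$) and the leading weight $\lambda_1$ small enough and then ordering the weights as $v_1$ large relative to $v_3$, next $v_3$ large relative to $\check{\alpha}(v_1+v_2)$, and $v_2$ so that the middle row is dominated; the $O(\lambda_k)$ smallness of $[C_k]_{12},[C_k]_{21},[C_k]_{31},[C_k]_{32}$ together with $\sigma_{\text{A},k},\sigma_{\text{B},k}<1$ is what keeps these three requirements compatible, and $c$ may be any positive number below $\hat{\mu}\inf_k\tilde{\alpha}_k$, which is positive by Lemmas~\ref{lemma_phi} and~\ref{lemma_pi}. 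Lemma~\ref{lemma_determine_radius} is the tool that certifies this: with $c^{*}=1-c\lambda_k$ it gives $\rho(C_k)<c^{*}$ iff $\det(c^{*}I-C_k)>0$, and~(\ref{alpha_check_sum}) is precisely what keeps that determinant positive uniformly in $k$.

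Granting the weighting, set $W^k:=\mathbf{v}^{T}\bm{s}^k=v_1s_1^k+v_2s_2^k+v_3s_3^k\ge0$. Multiplying $\bm{s}^{k+1}\le C_k\bm{s}^k+\mathbf{d}_k$ by $\mathbf{v}^{T}$ and using $\mathbf{v}^{T}C_k\le(1-c\lambda_k)\mathbf{v}^{T}$ gives $W^{k+1}\le(1-c\lambda_k)W^k+r_k$ with $r_k:=\mathbf{v}^{T}\mathbf{d}_k=v_3\delta_{\text{B},2}\xi_{k+1}(\lambda_k-\lambda_{k+1})\|\nabla F(\mathbf{1}x^{*})\|_2\ge0$. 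Since $\{\lambda_k\}$ converges, $\lambda_k-\lambda_{k+1}\to0$, so $r_k\to0$, and $1-c\lambda_k\ge0$ for all $k$ by the choice of $c$. Iterating the scalar recursion, $W^{k+1}\le\big(\prod_{i=1}^{k}(1-c\lambda_i)\big)W^1+\sum_{i=1}^{k}\big(\prod_{j=i+1}^{k}(1-c\lambda_j)\big)r_i$; the first term tends to $0$ by Lemma~\ref{lemma:s_1}(a) (this is where $\sum_k\lambda_k=\infty$ enters) and the second by Lemma~\ref{lemma:s_1}(c) applied to the null sequence $\{r_k\}$. Hence $W^k\to0$, and since $v_1,v_2,v_3>0$ this forces $s_1^k\to0$ and $s_2^k\to0$. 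Finally, for each agent $i$, $\|x_i^k-x^{*}\|_2\le\|x_i^k-\bar{x}^k\|_2+\|\bar{x}^k-x^{*}\|_2$, and the first term is bounded by a constant multiple of $\|\mathbf{x}^k-\mathbf{1}\bar{x}^k\|_{\text{A}}$ via Lemma~\ref{lemma_equivalence_norms} and Definition~\ref{definition_x}; therefore $x_i^k\to x^{*}$, which is the claim.

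The hard part is the second step: producing one weighting $\mathbf{v}$ and one rate $c\lambda_k$ that serve uniformly in $k$. This is exactly the analytical obstacle advertised in the introduction. In ordinary gradient tracking $\lambda_k\equiv1$, $C_k$ is a fixed matrix, and $\rho(C_k)<1$ ends the proof at once; here the decaying weights both degrade the optimality contraction to $1-\hat{\mu}\tilde{\alpha}_k\lambda_k\uparrow1$ and leave the non-vanishing cross terms $[C_k]_{13},[C_k]_{23}$ intact, so one must balance a fast geometric contraction of the consensus and tracking block against the slow, non-summable contraction $c\lambda_k$ of the optimality gap, and then argue that the residual forcing of size $\lambda_k-\lambda_{k+1}$ is negligible on the $\lambda_k$-timescale — the latter being precisely what Lemma~\ref{lemma:s_1} delivers. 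The rest is routine bookkeeping with the norm equivalences of Lemma~\ref{lemma_equivalence_norms}.
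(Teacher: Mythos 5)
Your skeleton---the linear system of Lemma \ref{lemma_three_inequalities}, reduction to a scalar recursion with contraction factor $1-c\lambda_k$ and vanishing forcing, and the finish via Lemma \ref{lemma:s_1} using $\sum_k\lambda_k=\infty$---is the right one, and its endgame coincides with the paper's treatment of $s_1^k$. Your middle step, however, differs from the paper (a single fixed weighting $W^k=\mathbf{v}^{T}\bm{s}^k$, versus the paper's three stages: $\rho(C_k)<1$ for finite $k$ via the determinant criterion of Lemma \ref{lemma_determine_radius}, boundedness of $\bm{s}^k$ through a norm with $\|C_k\|_{\text{C}}\le1$, then a cascade in which the $2\times2$ block in $(s_2^k,s_3^k)$ is driven to zero before the scalar recursion for $s_1^k$ is closed), and it is exactly there that your argument has a genuine gap. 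You assert that one positive $\mathbf{v}$ and one $c>0$ satisfy $\mathbf{v}^{T}C_k\le(1-c\lambda_k)\mathbf{v}^{T}$ for \emph{all} $k\ge1$, on the strength of $0\le\lambda_k-\lambda_{k+1}\le\lambda_k$ alone. Write the first and third component inequalities out: the third row forces $v_3\gtrsim\check{\alpha}\|\phi_k\|_2\,v_1/(1-\sigma_{\text{B},k})$, while the first row contains the term $\sqrt{n}L\delta_{\text{B},2}\xi_{k+1}(\lambda_k-\lambda_{k+1})v_3$, which must be dominated by $(\hat{\mu}\tilde{\alpha}_k-c)\lambda_k v_1$. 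Eliminating $v_3$ shows the two requirements are compatible only if $(\lambda_k-\lambda_{k+1})/\lambda_k$ is below a threshold of order $\sqrt{n}\mu\theta_k(1-\sigma_{\text{B},k})/\big(L\delta_{\text{B},2}\xi_{k+1}\|\phi_k\|_2\big)$---precisely the paper's condition (\ref{inequality_lambdas}). This threshold is independent of $\check{\alpha}$ and of $\lambda_1$ (the step size cancels against $\tilde{\alpha}_k=\theta_k\check{\alpha}$), so it cannot be bought by shrinking the step size or the leading weight; and with only $\lambda_k-\lambda_{k+1}\le\lambda_k$ it can fail at early iterations (e.g.\ $\lambda_k=1/k$ gives ratio $1/2$ at $k=1$, while the threshold is $\ll 1/2$ for ill-conditioned problems or graphs with $\sigma_{\text{B},k}$ near $1$). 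So the claimed $k$-uniform weighting does not exist in general, and the ``$O(\lambda_k)$ smallness'' of $[C_k]_{31}$ and $[\mathbf{d}_k]_3$ is not the right sense of smallness for your construction.

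The paper meets the same obstruction inside $\det(I-C_k)>0$: its $e_{3,k}$ is positive exactly under (\ref{inequality_lambdas}), it observes that this holds only for $k\ge k'$ (because $\lambda_{k+1}/\lambda_k\to1$), and it restarts the analysis at $k'$; the hypothesis (\ref{alpha_check_sum}) is stated after that shift. Your proof can be repaired the same way---establish the componentwise inequality only for $k\ge k'$ and dispose of the finitely many earlier iterations by noting each $C_k$ is a fixed finite matrix, so $\bm{s}^{k'}$ is finite---but as written the ``once and for all, for all $k\ge1$'' claim is false. Two smaller points: Lemma \ref{lemma_determine_radius} only certifies $\rho(C_k)<c^{*}$ for each $k$ separately, and per-$k$ spectral radius bounds do not control products of time-varying matrices nor produce a $k$-independent $\mathbf{v}$; the three componentwise inequalities you sketch are the actual content and must be verified directly, not inferred from that lemma. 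Likewise $c$ must be chosen strictly below $\hat{\mu}\inf_k\tilde{\alpha}_k$ with additional slack absorbing the off-diagonal first-row terms, not merely below $\hat{\mu}\inf_k\tilde{\alpha}_k$.
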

\begin{proof}
Our proof idea is based on the spectral radius of $C_{k}$. Considering $\lambda_{k}\to0$ as $k\to\infty$, we analyze $\rho(C_{k})$ under two cases, $1\le k<\infty$ and $k\to\infty$, respectively.

When $1\le k<\infty$, we will show that $\rho(C_{k})$ is strictly less than 1.
By Lemma \ref{lemma_determine_radius}, it suffices to prove that $[C_{k}]_{ii}<1$ for all $i \in [3]$ and $\det(I-C_{k}) > 0$. If $\tilde{\alpha}_{k}\lambda_{k} \le 2/(\hat{\mu}+\hat{L})$, we have $0 \le [C_{k}]_{11} < 1$. To guarantee $[C_{k}]_{22}<1$ and $[C_{k}]_{33}<1$, we only need $\check{\alpha} < \min\left\{ \frac{1-\sigma_{\text{A},k}}{\sqrt{n}L\sigma_{\text{A},k}\lambda_{k}\|\pi_{k}\|_{\text{A}}}, \frac{1-\sigma_{\text{B},k}}{L\delta_{\text{B},2}\xi_{k+1}\lambda_{k+1}\|A_{k}\|_{2}} \right\}$.

If $\check{\alpha}$ further satisfies
\begin{equation}\label{check_alpha_11}
\begin{aligned}
\check{\alpha} \le \min\Big\{ \frac{1-\sigma_{\text{A},k}}{2\sqrt{n}L\sigma_{\text{A},k}\lambda_{k}\|\pi_{k}\|_{\text{A}}}, \frac{1-\sigma_{\text{B},k}}{2L\delta_{\text{B},2}\xi_{k+1}\lambda_{k+1}\|A_{k}\|_{2}} \Big\},
\end{aligned}
\end{equation}
then $1-[C_{k}]_{22} \ge (1-\sigma_{\text{A},k})/2$ and $1-[C_{k}]_{33} \ge (1-\sigma_{\text{B},k})/2$.

We now need to make sure that $\det(I-C_{k}) > 0$ holds. Let $\tilde{\alpha}_{k} = \theta_{k}\check{\alpha}$ with $\theta_{k}>0$. Then, we have
\begin{equation}\label{equation_det}
\begin{aligned}
&\det(I-C_{k}) \\
= &(1-[C_{k}]_{11})(1-[C_{k}]_{22})(1-[C_{k}]_{33}) \\
&- [C_{k}]_{12}[C_{k}]_{23}[C_{k}]_{31} - [C_{k}]_{13}[C_{k}]_{21}[C_{k}]_{32}\\
&- [C_{k}]_{13}[C_{k}]_{31}(1-[C_{k}]_{22}) - [C_{k}]_{23}[C_{k}]_{32}(1-[C_{k}]_{11}) \\
&- [C_{k}]_{12}[C_{k}]_{21}(1-[C_{k}]_{33}) \\
\ge & \frac{\hat{\mu}(1-\sigma_{\text{A},k})(1-\sigma_{\text{B},k})\theta_{k}\lambda_{k}\check{\alpha}}{4} \\
&- nL^{2}\delta_{\text{A},\text{B}}\sigma_{\text{A},k}\delta_{\text{B},2}\xi_{k+1}\theta_{k}\lambda_{k}\check{\alpha}^{2}(\lambda_{k} - \lambda_{k+1} \\
&\quad + \sqrt{n}L\|A_{k}\|_{2}\|\pi_{k}\|_{2}\lambda_{k}\lambda_{k+1}\check{\alpha}) \\
&- nL^{2}\sigma_{\text{A},k}\delta_{\text{B},2}\xi_{k+1}\|\phi_{k}\|_{2}\|\pi_{k}\|_{\text{A}}\lambda_{k}\check{\alpha}^{2}\big((\|A_{k}\|_{2}+1)\lambda_{k+1} \\
&\quad + \lambda_{k} - \lambda_{k+1} + \sqrt{n}L\|A_{k}\|_{2}\|\pi_{k}\|_{2}\lambda_{k}\lambda_{k+1}\check{\alpha}\big) \\
&- \frac{\sqrt{n}L\delta_{\text{B},2}\xi_{k+1}\|\phi_{k}\|_{2}(1-\sigma_{\text{A},k})\check{\alpha}}{2}(\lambda_{k} - \lambda_{k+1} \\
&\quad +\sqrt{n}L\|A_{k}\|_{2}\|\pi_{k}\|_{2}\lambda_{k}\lambda_{k+1}\check{\alpha}) \\
&- \hat{\mu}L\delta_{\text{A},\text{B}}\sigma_{\text{A},k}\delta_{\text{B},2}\xi_{k+1}\theta_{k}\lambda_{k}\check{\alpha}^{2} \big((\|A_{k}\|_{2}+1)\lambda_{k+1} \\
&\quad+ \lambda_{k} - \lambda_{k+1} + \sqrt{n}L\|A_{k}\|_{2}\|\pi_{k}\|_{2}\lambda_{k}\lambda_{k+1}\check{\alpha}\big) \\
&- \frac{n\sqrt{n}L^{2}\sigma_{\text{A},k}\|\pi_{k}\|_{\text{A}}(1-\sigma_{\text{B},k})\theta_{k}\lambda_{k}^{2}\check{\alpha}^{2}}{2}.
\end{aligned}
\end{equation}

By (\ref{equation_det}), it is evident that ensuring $\det(I-C_{k}) > 0$ is equivalent to ensuring $e_{1,k}\check{\alpha}^{2} + e_{2,k}\check{\alpha} - e_{3,k} < 0$, where
\begin{align*}
e_{1,k} =& n\sqrt{n}L^{2}\sigma_{\text{A},k}\delta_{\text{B},2}\xi_{k+1}\|A_{k}\|_{2}\|\pi_{k}\|_{2}\lambda_{k}^{2}\lambda_{k+1} (L\delta_{\text{A},\text{B}}\theta_{k} \\
&+ L\|\phi_{k}\|_{2}\|\pi_{k}\|_{\text{A}} + \mu \delta_{\text{A},\text{B}}\theta_{k}), \\
e_{2,k} =& nL\sigma_{\text{A},k}\delta_{\text{B},2}\xi_{k+1}\lambda_{k}(\lambda_{k} - \lambda_{k+1})\big( (L+\mu)\delta_{\text{A},\text{B}}\theta_{k} \\
&\quad+ L\|\phi_{k}\|_{2}\|\pi_{k}\|_{\text{A}} \big) \\
&+ nL\delta_{\text{B},2}\xi_{k+1}\lambda_{k}\lambda_{k+1}\big( 0.5L\|\phi_{k}\|_{2}(1-\sigma_{\text{A},k})\|A_{k}\|_{2}\|\pi_{k}\|_{2} \\
&\quad+ (L\sigma_{\text{A},k}\|\phi_{k}\|_{2}\|\pi_{k}\|_{\text{A}} + \mu\delta_{\text{A},\text{B}}\sigma_{\text{A},k}\theta_{k})(\|A_{k}\|_{2}+1) \big) \\
&+ 0.5n\sqrt{n}L^{2}\sigma_{\text{A},k}\|\pi_{k}\|_{\text{A}}(1-\sigma_{\text{B},k})\theta_{k}\lambda_{k}^{2}, \\
e_{3,k} =& 0.25\hat{\mu}(1-\sigma_{\text{A},k})(1-\sigma_{\text{B},k})\theta_{k}\lambda_{k} \\
&- 0.5\sqrt{n}L\delta_{\text{B},2}\xi_{k+1}\|\phi_{k}\|_{2}(1-\sigma_{\text{A},k})(\lambda_{k} - \lambda_{k+1}).
\end{align*}


If $\lambda_{k+1}/\lambda_{k}$ and $\check{\alpha}$ satisfy
\begin{align}
&1 - \frac{\sqrt{n}\mu(1-\sigma_{\text{B},k})\theta_{k}}{2L\delta_{\text{B},2}\xi_{k+1}\|\phi_{k}\|_{2}} < \frac{\lambda_{k+1}}{\lambda_{k}} \le 1, \label{inequality_lambdas} \\
&0 < \check{\alpha} < \frac{2e_{3,k}}{e_{2,k} + \sqrt{e_{2,k}^{2}+4e_{1,k}e_{3,k}}}, \label{check_alpha_2}
\end{align}
then $e_{1,k}\check{\alpha}^{2} + e_{2,k}\check{\alpha} - e_{3,k} < 0$. Notably, in (\ref{inequality_lambdas}), since $\frac{\sqrt{n}\mu(1-\sigma_{\text{B},k})\theta_{k}}{2L\delta_{\text{B},2}\xi_{k+1}\|\phi_{k}\|_{2}}>0$, we have $1 - \frac{\sqrt{n}\mu(1-\sigma_{\text{B},k})\theta_{k}}{2L\delta_{\text{B},2}\xi_{k+1}\|\phi_{k}\|_{2}} < 1$. Given that $\sum_{k=1}^{\infty}\lambda_{k}=\infty$ and $\lambda_{k+1}\le\lambda_{k}$, $\lim_{k\to\infty}\frac{\lambda_{k+1}}{\lambda_{k}}=1$. There must exist a $k'$ such that (\ref{inequality_lambdas}) holds for all $k\ge k'$. Hence, we only need $\check{\alpha}$ to satisfy $\check{\alpha} \le \frac{2}{\theta_{k}\lambda_{k}(\hat{\mu}+\hat{L})}$, (\ref{check_alpha_11}), and (\ref{check_alpha_2}) for all $k\ge k'$. Let $k = k - k'$. When $\check{\alpha}$ satisfies
\begin{equation}\label{alpha_check_sum}
\begin{aligned}
0 <& \check{\alpha} < \min_{k\ge1}\Bigg\{ \frac{2}{\theta_{k}\lambda_{k}(\hat{\mu}+\hat{L})}, \frac{1-\sigma_{\text{A},k}}{2\sqrt{n}L\sigma_{\text{A},k}\lambda_{k}\|\pi_{k}\|_{\text{A}}}, \\
&\frac{1-\sigma_{\text{B},k}}{2L\delta_{\text{B},2}\xi_{k+1}\lambda_{k+1}\|A_{k}\|_{2}}, \frac{2e_{3,k}}{e_{2,k} + \sqrt{e_{2,k}^{2}+4e_{1,k}e_{3,k}}} \Bigg\},
\end{aligned}
\end{equation}
then $\rho(C_{k}) < 1$ holds for $1\le k <\infty$.

When $k\to\infty$, $\lambda_{k}\to0$ and
	\begin{equation}\label{eq:C_infty}
C_{\infty} = 
\left[ \begin{matrix}
1 & 0 & \check{\alpha}\|\phi_{\infty}\|_{2} \\
0 & \sigma_{\text{A},\infty} & \check{\alpha}\delta_{\text{A},\text{B}}\sigma_{\text{A},\infty} \\
0 & 0 & \sigma_{\text{B},\infty}
	\end{matrix} \right].
	\end{equation}
Since $C_{\infty}$ is an upper triangular matrix, its eigenvalues are equal to its diagonal elements. Given that $\sigma_{\text{A},\infty}<1$ and $\sigma_{\text{B},\infty}<1$, we have $\rho(C_{\infty}) = 1$. 

By Lemma 5.6.10 in \cite{horn2012matrix}, given an arbitrary $\varepsilon>0$, there exists an induced matrix norm $\|\cdot\|_{\text{C}}$ such that $\|C_{k}\|_{\text{C}} \le \rho(C_{k}) + \varepsilon$ when $1\le k<\infty$ and $\|C_{k}\|_{\text{C}} = 1$ when $k\to\infty$. Since $\rho(C_{k})<1$ when $1\le k<\infty$, we have $\|C_{k}\|_{\text{C}}\le 1$ for all $k\ge1$. Then, recursively applying (\ref{inequality_linear_system}), we have
\begin{equation}\label{ieq:recursive_linear_sys}
\begin{aligned}
\| \bm{s}^{k+1} \|_{\text{C}} &\le \|C_{k}C_{k-1}\cdots C_{1}\bm{s}^{1}\|_{\text{C}} + \sum_{m=1}^{k}\Big\|\Big(\prod_{l=m+1}^{k}C_{l}\Big) \mathbf{d}_{m}\Big\|_{\text{C}} \\
&\le \|\bm{s}^{1}\|_{\text{C}} + \sum_{m=1}^{\infty}\|\mathbf{d}_{m}\|_{\text{C}}.
\end{aligned}
\end{equation}
Based on $\sum_{m=1}^{\infty}[\mathbf{d}_{m}]_{i}<\infty$ for all $i\in[3]$ and the equivalence of norms, we have $\bm{s}^{k}$ is bounded.

Let $s_{1}^{k} \le c_{1}$, $s_{2}^{k} \le c_{2}$, and $s_{3}^{k} \le c_{3}$. Using (\ref{inequality_linear_system}), we have
	\begin{subequations}
	\begin{align}
	s_{1}^{k+1} &\le (1-\hat{\mu}\tilde{\alpha}_{k}\lambda_{k})s_{1}^{k} + c_{2}[C_{k}]_{12} + [C_{k}]_{13}s_{3}^{k}, \label{ieq:s_1}\\
	\tilde{\bm{s}}^{k+1} &\le \tilde{C}_{k}\tilde{\bm{s}}^{k} + \tilde{\mathbf{d}}_{k},
	\end{align}
	\end{subequations}
where $\tilde{\bm{s}}^{k} := [s_{2}^{k}, s_{3}^{k}]^{T}$, $\tilde{C}_{k} = \left[\begin{matrix} \sigma_{\text{A},k} & [C_{k}]_{23} \\ 0 & \sigma_{\text{B},k}\end{matrix} \right]$, and $\tilde{\mathbf{d}}_{k} = \big[c_{1}[C_{k}]_{21} + c_{2}\sqrt{n}\check{\alpha}L\sigma_{\text{A},k}\|\pi_{k}\|_{\text{A}}\lambda_{k}, c_{1}[C_{k}]_{31} + c_{2}[C_{k}]_{32} + c_{3}\check{\alpha}L\delta_{\text{B},2}\xi_{k+1}\lambda_{k+1}\|A_{k}\|_{2} + [\mathbf{d}_{k}]_{3}\big]^{T}$. Clearly, $\rho({\tilde{C}_{k}})<1$. Thus, $\tilde{\bm{s}}^{k}$ is convergent and satisfies $\tilde{\bm{s}}^{\infty} \le (I-\tilde{C}_{\infty})^{-1}\tilde{\mathbf{d}}_{\infty}$. Since $\tilde{\mathbf{d}}_{\infty} = 0$ and $\tilde{\bm{s}}^{k}\ge0$, we have $s_{2}^{\infty} = \lim_{k\to\infty}\|\mathbf{x}^{k}-\mathbf{1}\bar{x}^{k}\|_{\text{A}} = 0$ and $s_{3}^{\infty} = \lim_{k\to\infty}\|\mathbf{y}^{k} - \pi_{k}\hat{y}^{k}\|_{\text{B}}=0$.

By (\ref{ieq:s_1}), we have
	\begin{equation}
	s_{1}^{k+1} \le \Big(\prod_{i=1}^{k}(1-\hat{\mu}\tilde{\alpha}_{i}\lambda_{i})\Big) s_{1}^{1} + \sum_{i=1}^{k} \Big(\prod_{j=i+1}^{k}(1-\hat{\mu}\tilde{\alpha}_{j}\lambda_{j})\Big) d_{i},
	\end{equation}
where $d_{k} = c_{2}[C_{k}]_{12} + [C_{k}]_{13}s_{3}^{k}$. Using Lemma \ref{lemma:s_1}, we have $\lim_{k\to\infty}s_{1}^{k}\le0$. Since $s_{1}^{k}\ge0$, $\lim_{k\to\infty}s_{1}^{k} = \lim_{k\to\infty}\|\bar{x}^{k} - x^{*}\|_{2} = 0$. The proof is complete.
\end{proof}

\begin{figure}[!t]
\begin{center}
  \includegraphics[width = \linewidth]{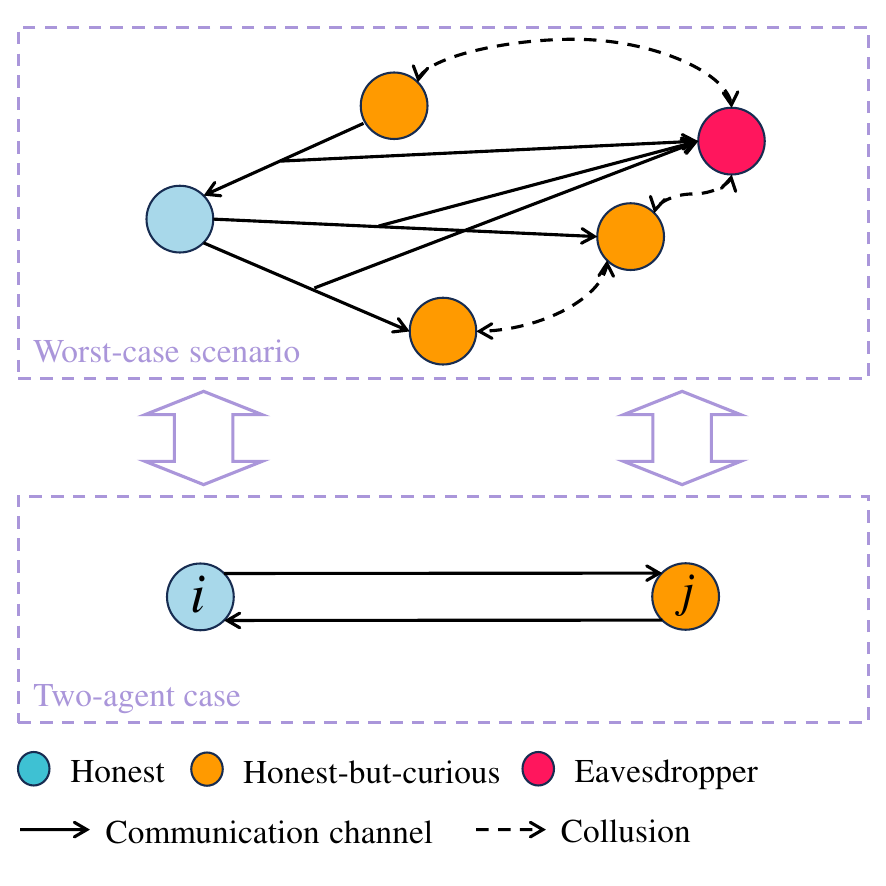}
  \caption{The worst-case scenario of privacy leakage risk for an honest agent and the equivalent two-agent system.}
  \label{fig_two_agents}
\end{center}
\end{figure}
\section{Privacy Analysis}\label{section_privacy}
In this section, we will analyze the privacy preservation performance of the proposed algorithm in detail. In the problem set, it is possible that all of an honest agent's neighbors are honest-but-curious adversaries, and these adversaries can collude with external eavesdroppers to infer the honest agent’s private information. We consider an equivalent two-agent system to simplify the analysis, as depicted in Fig. \ref{fig_two_agents}. Within this system, agents $i$ and $j$ are designated as the honest agent and the attacker, respectively, engaging in bi-directional communication. Attacker $j$ has the abilities of both an external eavesdropper and an honest-but-curious adversary. Specifically, attacker $j$ has access to all exchanged messages on communication channels and also knows agent $i$'s update and communication protocols. Without loss of generality, we assert that an algorithm effectively defends against external eavesdroppers and honest-but-curious adversaries if it successfully protects agent $i$'s private information from being stolen by agent $j$ in this scenario. We have the following results.

\begin{theorem}\label{theorem_privacy}
For the distributed optimization problem (\ref{problem_optimization}), the proposed algorithm can protect the private information of agents, including their intermediate states and all gradients, from being stolen by both external eavesdroppers and honest-but-curious adversaries.
\end{theorem}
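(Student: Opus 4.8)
The plan is an indistinguishability (non-identifiability) argument carried out on the equivalent two-agent system of Fig.~\ref{fig_two_agents}: I will show that the data available to attacker $j$ is consistent with a whole family of internal configurations of the honest agent $i$, so that neither its intermediate states nor any of its gradients can be recovered. The first step is to pin down exactly what $j$ observes. Over all iterations $k$ it intercepts agent $i$'s two outgoing messages, $w_i^{k}:=x_i^{k}-\alpha_i y_i^{k}$ (the quantity its out-neighbors use in the $x$-update) and the scaled direction $[B_k]_{ji}y_i^{k}$ (used in the $y$-update), together with its own variables $x_j^{k},y_j^{k}$, the incoming messages $w_j^{k}$, and the structural form of (\ref{our_algorithm}). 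It does not know agent $i$'s private step size $\alpha_i$, the specific mixing weights agent $i$ assigns in $A_k$ and $B_k$, or the function $f_i$.

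Second, I would settle the gradient part using the identity (\ref{equation_leakage_WGT}), $y_i^{k+1}=\lambda_{k+1}\nabla f_i(x_i^{k+1})-\sum_{m=1}^{k}z_i^{m}$, which replaces the plain-GT leakage identity (\ref{equation_leakage_GT}). For plain GT, sending $k\to\infty$ gives the clean channel $\sum_{m=1}^{\infty}z_i^{m}=\nabla f_i(x^{*})$; here the decaying factor $\lambda_{k+1}\to0$ collapses the limit to the information-free relation $\sum_{m=1}^{\infty}z_i^{m}=0$, so the optimal gradient is no longer exposed. For finite $k$ the only gradient-bearing object the attacker can assemble is $\lambda_{k}\nabla f_i(x_i^{k})$, in which the gradient is rescaled by a vanishing factor and, more importantly, is tied to the unknown state $x_i^{k}$; hence knowing it does not reveal $\nabla f_i$ at any point the attacker can name. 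Making ``cannot name a point'' precise is exactly the job of the next step.

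Third, for the intermediate states I would exploit that $x_i^{k}$ enters every transmitted quantity only inside the combination $x_i^{k}-\alpha_i y_i^{k}$ and inside convex combinations weighted by agent-$i$-controlled coefficients, so that a perturbation of the private step size can be absorbed into a perturbation of the state trajectory. Concretely: fix any admissible $\bar\alpha_i\neq\alpha_i$ and construct a surrogate objective $\bar f_i(x)=\tfrac{1}{2} x^{\top}Qx+q^{\top}x$ with $\mu I\preceq Q\preceq LI$ (so Assumptions \ref{assumption_smooth} and \ref{assumption_convex} hold by construction), together with an alternative initialization $\bar x_i^{1}$, such that executing (\ref{our_algorithm}) with data $(\bar f_i,\bar\alpha_i,\bar x_i^{1})$ against the same incoming messages reproduces the very same outgoing messages $\{w_i^{k}\}$ and $\{[B_k]_{ji}y_i^{k}\}$. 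Since the induced trajectory satisfies $\bar x_i^{k}\neq x_i^{k}$ wherever $y_i^{k}\neq 0$, and $\nabla\bar f_i(x)=Qx+q\neq\nabla f_i(x)$ in general, the attacker cannot determine $x_i^{k}$ for any $k$ with $x_i^{k}\neq x^{*}$, nor $\nabla f_i$ at any point; combining this with the second step proves Theorem \ref{theorem_privacy}.

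I expect the decisive difficulty to lie in that construction. One must choose $(Q,q,\bar x_i^{1})$ and the alternative mixing weights so that, for every $k$, the gradient sequence $\nabla\bar f_i(\bar x_i^{k})$ produced by the surrogate satisfies the exact linear constraints the $y$-recursion (\ref{our_compact_algorithm_y}) imposes given the observed $y$-messages, while $\bar x_i^{k}$ itself is generated from those gradients through (\ref{our_compact_algorithm_x})---a fixed-point condition in disguise---and all of this must coexist with the spectral bounds $\mu I\preceq Q\preceq LI$. Verifying that the extra degrees of freedom ($\bar\alpha_i$, the free weights, the initial condition) leave enough slack to meet every constraint without breaking convexity or smoothness is the heart of the matter; by contrast, the reduction to the two-agent system and the leakage identity (\ref{equation_leakage_WGT}) are routine.
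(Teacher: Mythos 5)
Your reduction to the two-agent system of Fig.~\ref{fig_two_agents} and your treatment of the gradient at the optimum coincide with the paper: both use the weighted leakage identity (\ref{equation_leakage_WGT}) and the fact that $\lambda_{k+1}\to 0$ erases $\nabla f_i(x^{*})$ from the limit relation. Where you diverge is on the intermediate states and the finite-$k$ gradients, and this is where your proposal has a genuine gap. You base everything on an indistinguishability construction --- a surrogate quadratic $\bar f_i(x)=\tfrac12 x^{\top}Qx+q^{\top}x$ with $\mu I\preceq Q\preceq LI$, an alternative step size $\bar\alpha_i$, initialization $\bar x_i^{1}$, and alternative mixing weights, chosen so that running (\ref{our_algorithm}) against the same incoming messages reproduces the exact outgoing transcript $\{x_i^{k}-\alpha_i y_i^{k}\}$ and $\{[B_k]_{ji}y_i^{k}\}$ --- but you never carry that construction out. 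You yourself flag it as ``the heart of the matter'': one must satisfy, simultaneously and for every $k$, the coupled recursions for $\bar x_i^{k}$ and $\bar y_i^{k}$, the requirement that the transmitted combinations match the observed ones, and the spectral bounds on $Q$. Asserting that the free parameters ``leave enough slack'' is not a proof; without exhibiting (or at least counting degrees of freedom against constraints for) such a surrogate, the claim that the attacker cannot determine $x_i^{k}$ or $\nabla f_i(x_i^{k})$ for finite $k$ remains unestablished.

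The paper takes a simpler route that avoids this difficulty entirely: it writes down the systems of equations the attacker can actually form from its observations, namely (\ref{equation_system_equations_x}) for the states, with $(K-1)p$ equations in $(K-1)(p+1)$ unknowns (the states plus the unobserved weights $[A_k]_{ij}$), and (\ref{equation_system_equations_y}) for the gradients, with $Kp$ equations in $(2K-1)p$ unknowns, and concludes from the unknowns outnumbering the equations that the attacker's inference problem admits infinitely many solutions, handling $x_i^{1}$ and $\nabla f_i(x_i^{1})$ separately via the unknown $\alpha_i$, $y_i^{1}$, and $[B_1]_{ji}$. Your simulation-style argument would, if completed, be strictly more informative than this counting argument (underdeterminedness alone does not certify that the spurious solutions are consistent with some admissible $\mu$-strongly convex, $L$-smooth objective, which is exactly what your construction would certify), but as submitted the decisive step is missing. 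To salvage your plan with least effort, you could replace the explicit surrogate construction by the paper's equation-counting argument, or alternatively complete the construction in the quadratic case by checking that the matching conditions reduce to a consistent underdetermined linear system in $(Q,q,\bar x_i^{1},\bar\alpha_i)$ and the free weights.
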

\begin{proof}
In the two-agent system shown in Fig. \ref{fig_two_agents}, attacker $j$ has knowledge of the following information:
\begin{equation*}
\begin{aligned}
&I_{x,i}^{k} := x_{i}^{k} - \alpha_{i}y_{i}^{k},\quad I_{x,j}^{k} := x_{j}^{k} - \alpha_{j}y_{j}^{k}, \\
&I_{y,i}^{k} := [B_{k}]_{ji}y_{i}^{k},\quad I_{y,j}^{k} := [B_{k}]_{ij}y_{j}^{k}.
\end{aligned}
\end{equation*}
In addition, attacker $j$ is aware of the update protocol of agent $i$ as detailed below.
\begin{subequations}\label{equations_update_agent_i}
\begin{align}
&x_{i}^{k+1} = [A_{k}]_{ii}(x_{i}^{k} - \alpha_{i}y_{i}^{k}) + [A_{k}]_{ij}(x_{j}^{k} - \alpha_{j}y_{j}^{k}), \\
&y_{i}^{k+1} = [B_{k}]_{ii}y_{i}^{k} + [B_{k}]_{ij}y_{j}^{k} + \lambda_{k+1}\nabla f_{i}(x_{i}^{k+1}) \nonumber\\
&\qquad\quad- \lambda_{k}\nabla f_{i}(x_{i}^{k}).
\end{align}
\end{subequations}

Let $K$ denote the total number of iterations required for the algorithm to converge. To steal the intermediate states of agent $i$ (i.e., $x_{i}^{1}, x_{i}^{2}, \cdots, x_{i}^{K}$), attacker $j$ can construct the following system of equations based on the obtained information and the fact $[A_{k}]_{ii} = 1 - [A_{k}]_{ij}$.
\begin{equation}\label{equation_system_equations_x}
\begin{aligned}
&x_{i}^{2} = I_{x,i}^{1} + [A_{1}]_{ij}(I_{x,j}^{1} - I_{x,i}^{1}), \\
&x_{i}^{3} = I_{x,i}^{2} + [A_{2}]_{ij}(I_{x,j}^{2} - I_{x,i}^{2}), \\
&\ \vdots \\
&x_{i}^{K} = I_{x,i}^{K-1} + [A_{K-1}]_{ij}(I_{x,j}^{K-1} - I_{x,i}^{K-1}).
\end{aligned}
\end{equation}
In the above system of equations, there are $(K-1)p$ equations and $(K-1)(p+1)$ unknown variables (i.e., $x_{i}^{2}, x_{i}^{3}, \cdots, x_{i}^{K}$ and $[A_{1}]_{ij}, [A_{2}]_{ij}, \cdots, [A_{K-1}]_{ij}$). Since the number of unknowns exceeds the number of equations, there exist infinitely many solutions satisfying the above system of equations. Thus, attacker $j$ cannot infer $x_{i}^{2}, x_{i}^{3}, \cdots, x_{i}^{K}$ \citep{zhang2019admm}. In addition, attacker $j$ cannot infer $x_{i}^{1}$ based on $I_{x,i}^{1} := x_{i}^{1} - \alpha_{i}y_{i}^{1}$ due to the unknown variables of $y_{i}^{1}$ and $\alpha_{i}$. Therefore, attacker $j$ cannot steal the intermediate states of agent $i$.

In order to steal agent $i$'s gradient evaluated at any point (i.e., $\nabla f_{i}(x_{i}^{1}), \nabla f_{i}(x_{i}^{2}), \cdots, \nabla f_{i}(x_{i}^{K+1})$), attacker $j$ can build the following system of equations based on the facts $[B_{k}]_{ii} = 1 - [B_{k}]_{ji}$ and $y_{i}^{1} = \lambda_{1}\nabla f_{i}(x_{i}^{1})$.
\begin{equation}\label{equation_system_equations_y}
\begin{aligned}
&y_{i}^{2} = - I_{y,i}^{1} + I_{y,j}^{1} + \lambda_{2}\nabla f_{i}(x_{i}^{2}), \\
&y_{i}^{3} = y_{i}^{2} - I_{y,i}^{2} + I_{y,j}^{2} + \lambda_{3}\nabla f_{i}(x_{i}^{3}) - \lambda_{2}\nabla f_{i}(x_{i}^{2}), \\
&\ \vdots \\
&y_{i}^{K+1} = y_{i}^{K} - I_{y,i}^{K} + I_{y,j}^{K} + \lambda_{K+1}\nabla f_{i}(x_{i}^{K+1}) \\
&\qquad\quad\  - \lambda_{K}\nabla f_{i}(x_{i}^{K}).
\end{aligned}
\end{equation}
There are $Kp$ equations and $(2K-1)p$ unknown variables (i.e., $y_{i}^{2}, \cdots, y_{i}^{K}$ and $\nabla f_{i}(x_{i}^{2}), \cdots, \nabla f_{i}(x_{i}^{K+1})$). Following the similar reasoning as in (\ref{equation_system_equations_x}), attacker $j$ cannot infer $\nabla f_{i}(x_{i}^{2}), \nabla f_{i}(x_{i}^{3}), \cdots, \nabla f_{i}(x_{i}^{K+1})$. In addition, attacker $j$ cannot utilize $I_{x,i}^{1} := x_{i}^{1} - \alpha_{i}\lambda_{1}\nabla f_{i}(x_{i}^{1})$ and $I_{y,i}^{1} := [B_{1}]_{ji}\lambda_{1}\nabla f_{i}(x_{i}^{1})$ to infer $\nabla f_{i}(x_{i}^{1})$ since $x_{i}^{1}$, $\alpha_{i}$, and $[B_{1}]_{ji}$ are unknown. Finally, as elaborated in Section \ref{subsection_proposed_algorithm}, by combining (\ref{equation_system_equations_y}), attacker $j$ is also unable to infer $\nabla f_{i}(x_{i}^{K+1})$ due to $\lambda_{K+1} \to 0$. Therefore, attacker $j$ cannot steal the gradients of agent $i$.
\end{proof}

\section{Numerical Simulation}\label{section_simulation}
\begin{figure}[!t]
  \centering
  \includegraphics[width = 0.6\linewidth]{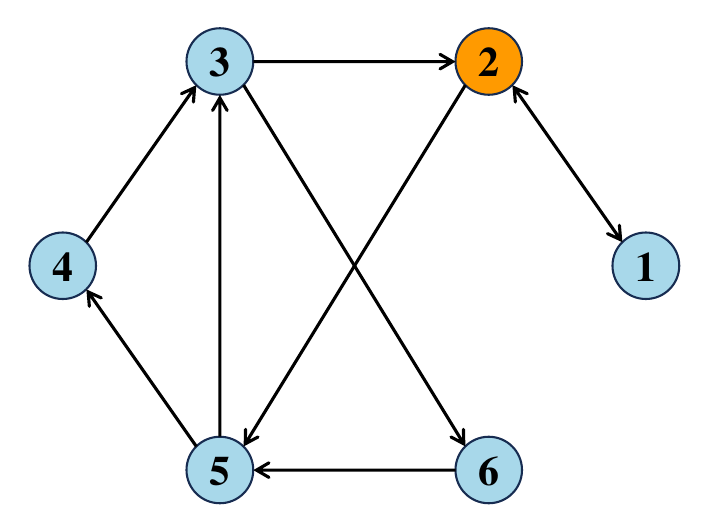}
  \caption{The topology of the communication network.}
  \label{simulation_network}
\end{figure}

In this section, we validate the effectiveness of the proposed algorithm through a canonical distributed estimation problem and the distributed training of a convolutional neural network (CNN). Consider a network system consisting of $n$ sensors. Each sensor has access to a measurement matrix $S_{i} \in \mathbb{R}^{d \times p}$ and collected measurement data $s_{i} \in \mathbb{R}^{d}$, satisfying $s_{i} = S_{i}x + \omega_{i}$, where $x \in \mathbb{R}^{p}$ is the unknown parameter to be estimated and $\omega_{i} \in \mathbb{R}^{d}$ represents Gaussian noise. All sensors collaborate to solve the following minimization problem to estimate $x$:
\begin{equation}\label{problem_sensor}
\min_{x\in\mathbb{R}^{p}} \sum_{i=1}^{n} \left( \| s_{i} - S_{i}x \|_{2}^{2} + r_{i}\| x \|_{2}^{2} \right),
\end{equation}
where $r_{i} \in \mathbb{R}$ is the regularization parameter.

In our simulations, we consider a directed network system with 6 sensors. The communication topology is illustrated in Fig. \ref{simulation_network}. Each sensor's measurement matrix $S_{i}$ is uniformly sampled from the interval $[0,10]$, with dimensions $d=3$ and $p=2$. The measurement data $s_{i}$ is calculated from predefined parameters $\tilde{x}$ uniformly sampled from $[0,1]$ and Gaussian noise $\omega_{i}$ with zero mean and unit standard deviation. The regularization parameters $r_{i}$ are uniformly set to 0.01.
\begin{figure}[!h]
  \centering
  \includegraphics[width = \linewidth]{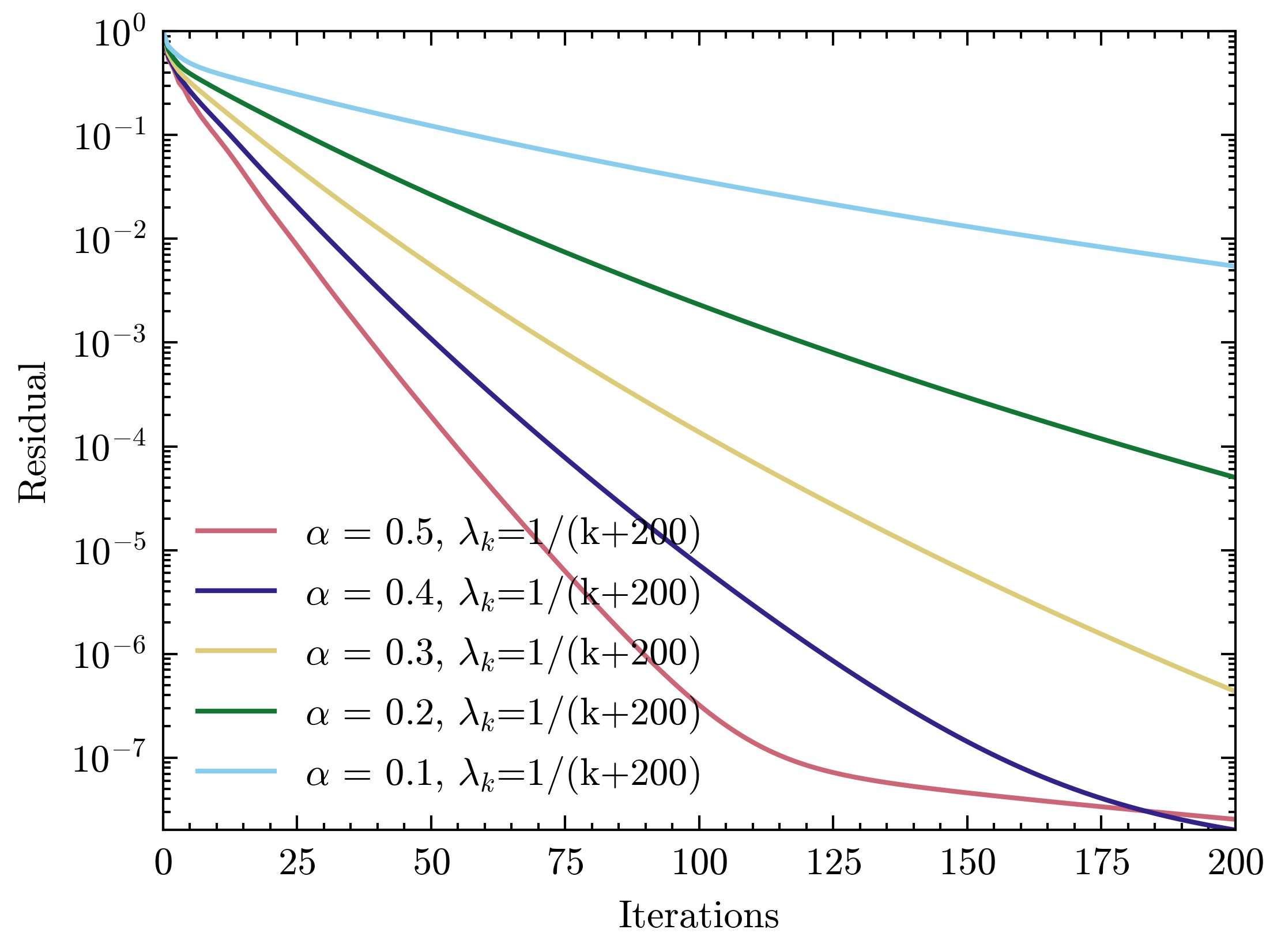}
  \caption{Convergence of the proposed algorithm under different parameters $\alpha$.}
  \label{fig_convergence_alpha}
\end{figure}

\begin{figure}[!h]
  \centering
  \includegraphics[width = \linewidth]{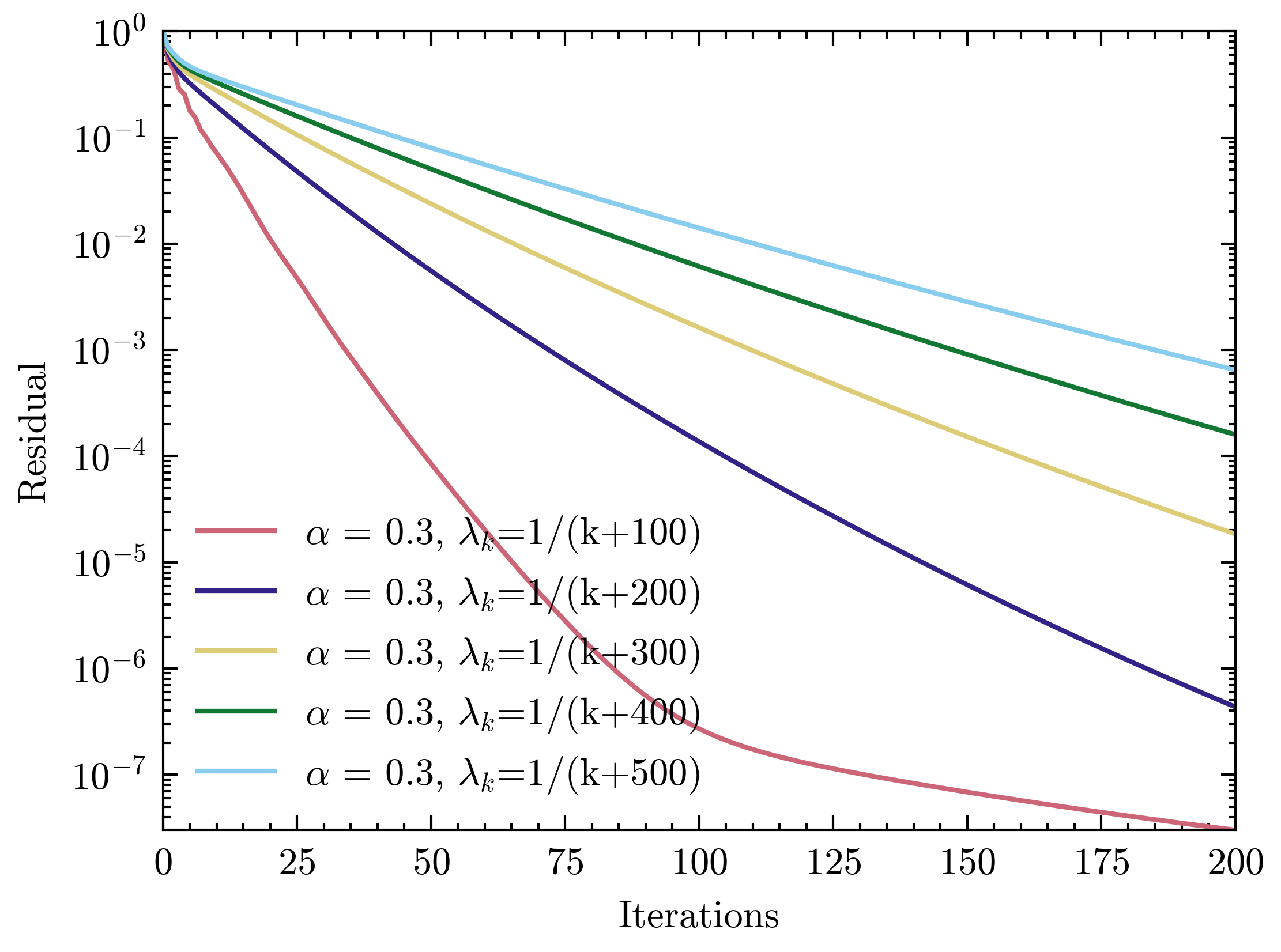}
  \caption{Convergence of the proposed algorithm under different parameters $m$.}
  \label{fig_convergence_m}
\end{figure}

\begin{figure}[!h]
  \centering
  \includegraphics[width = \linewidth]{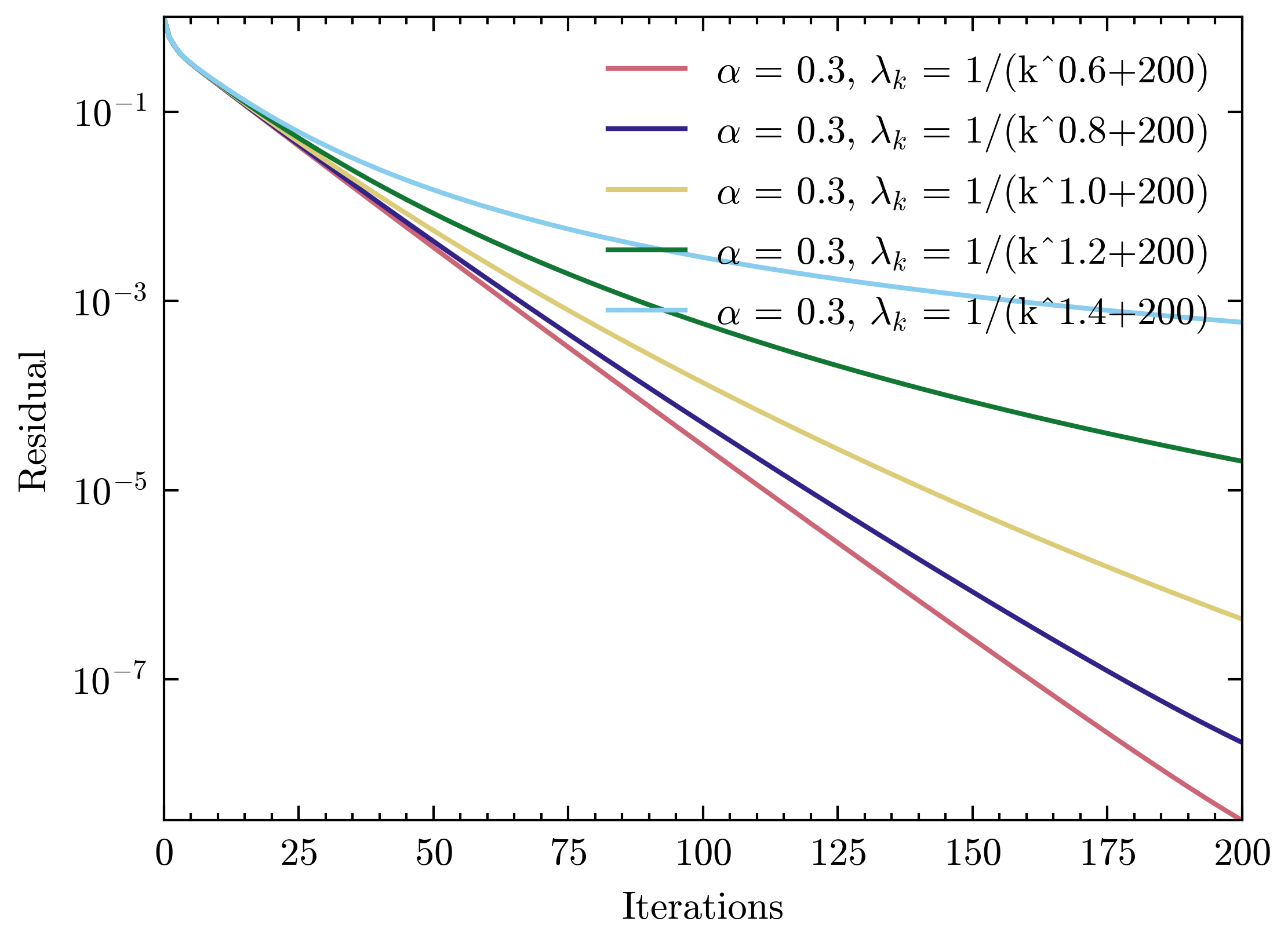}
  \caption{Convergence of the proposed algorithm under different parameters $e$.}
  \label{fig_convergence_e}
\end{figure}

\begin{table*}[!t]
\begin{center}
  \caption{The Inferred Results of DLG Attack for Different Algorithms.}
  \begin{tabular}{M{2cm}M{2cm}M{6.5cm}M{2cm}}
    \toprule
    Algorithm & Original Image & Process of DLG Attack & MSE \\
    \midrule
    AB \citep{saadatniaki2020decentralized} & \includegraphics[width=2cm]{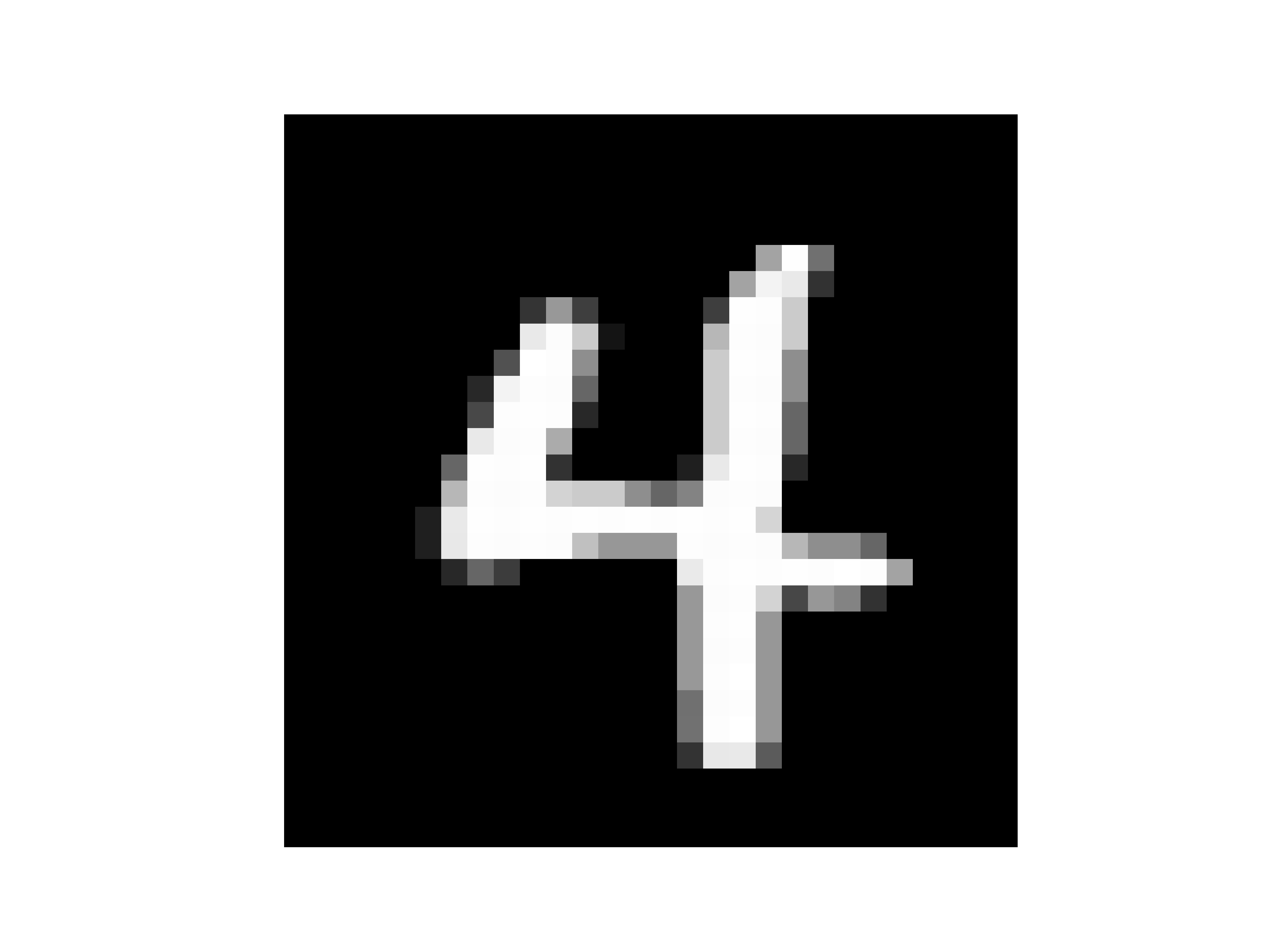} & \includegraphics[width=2cm]{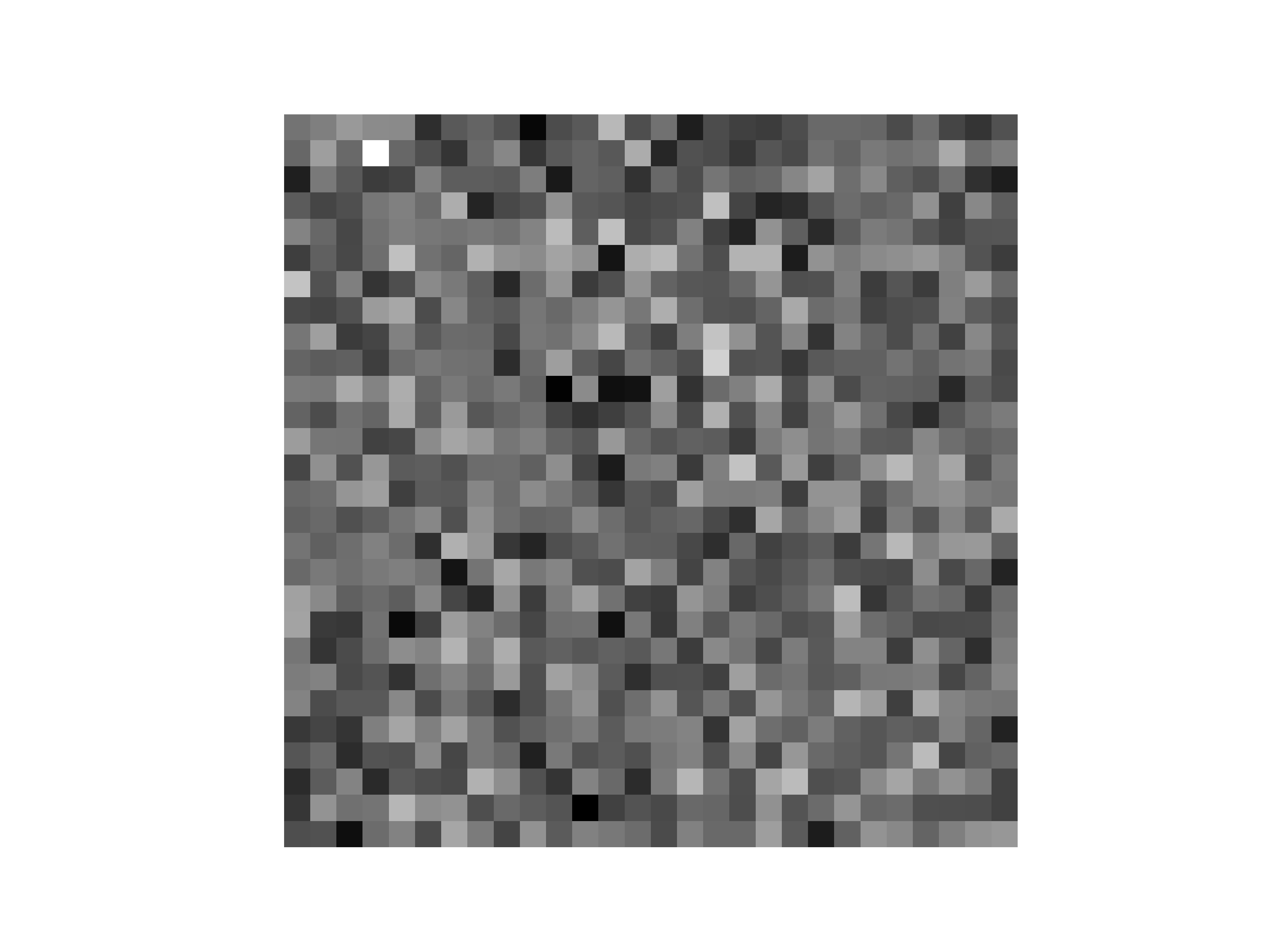} \includegraphics[width=2cm]{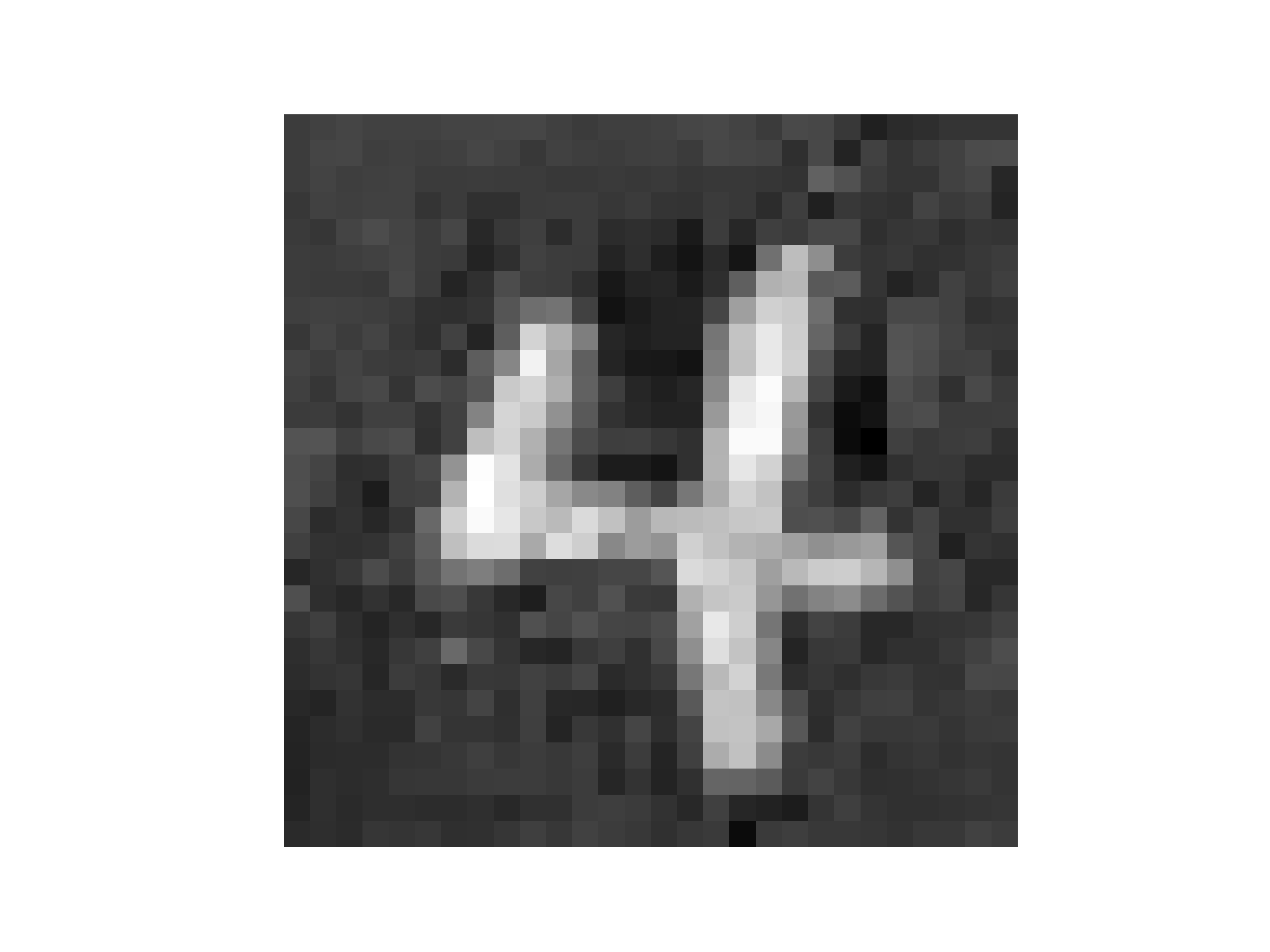} \includegraphics[width=2cm]{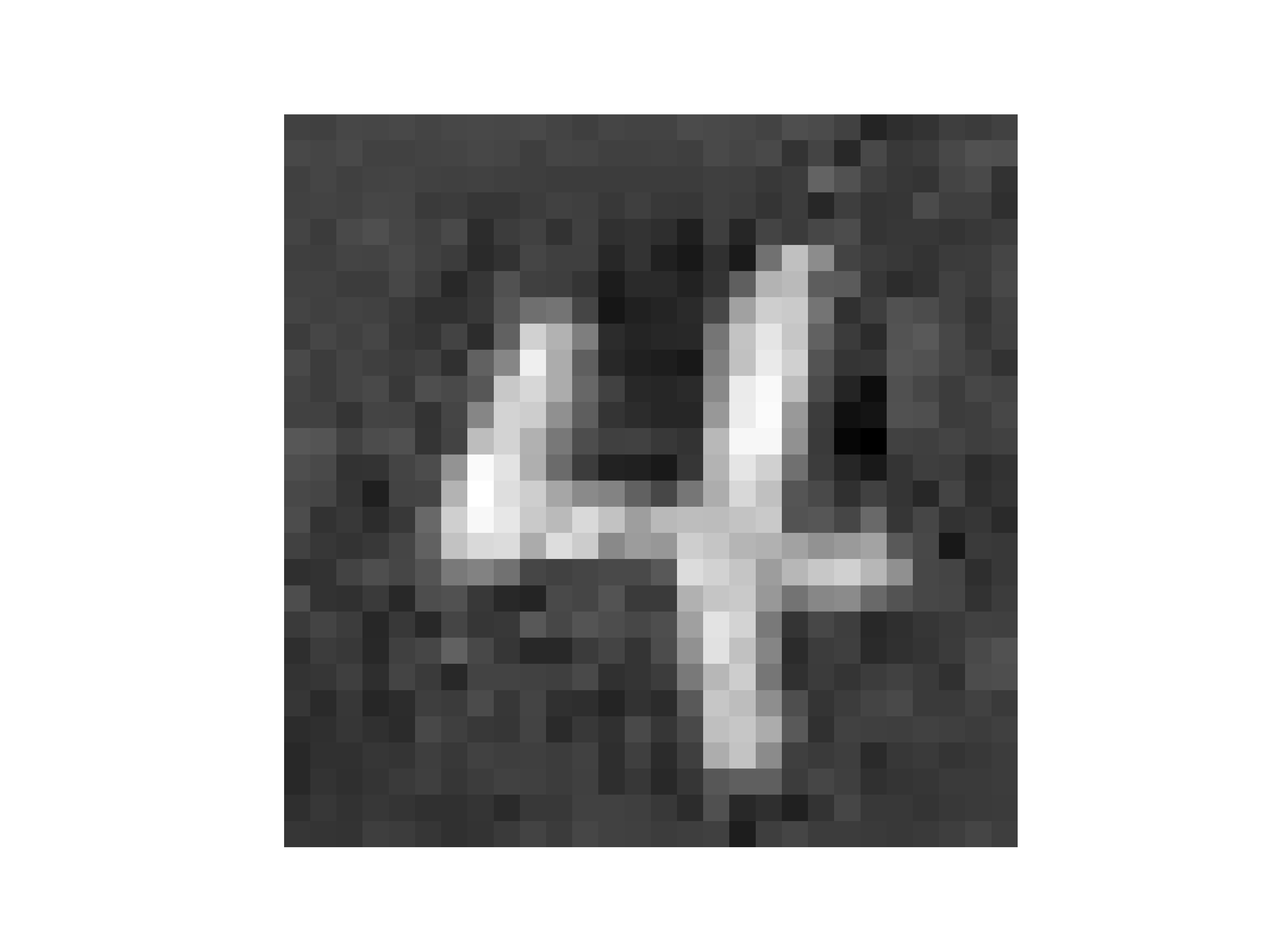} & $2.35\times 10^{-2}$ \\
    Our Method & \includegraphics[width=2cm]{original_image_0} & \includegraphics[width=2cm]{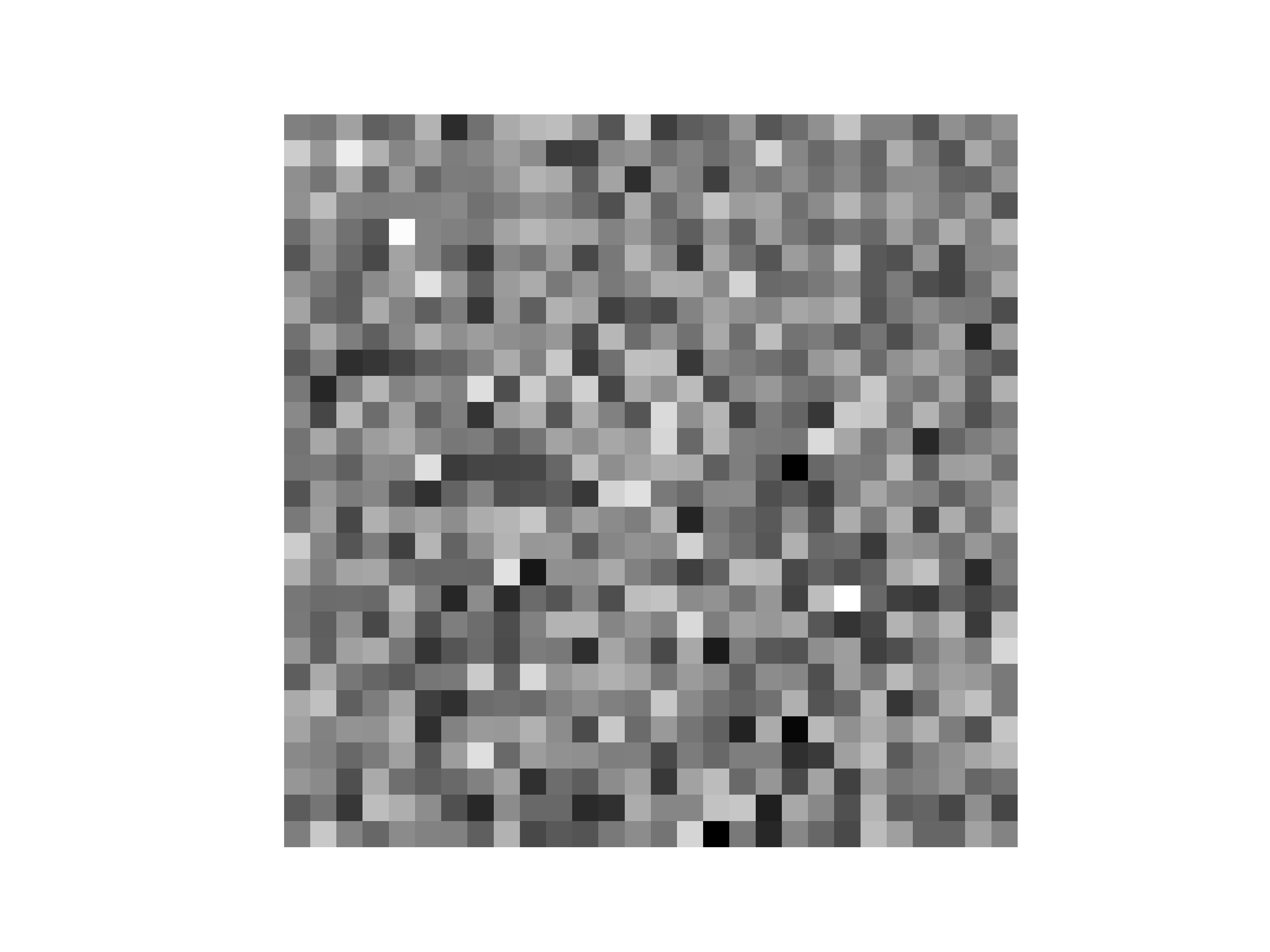} \includegraphics[width=2cm]{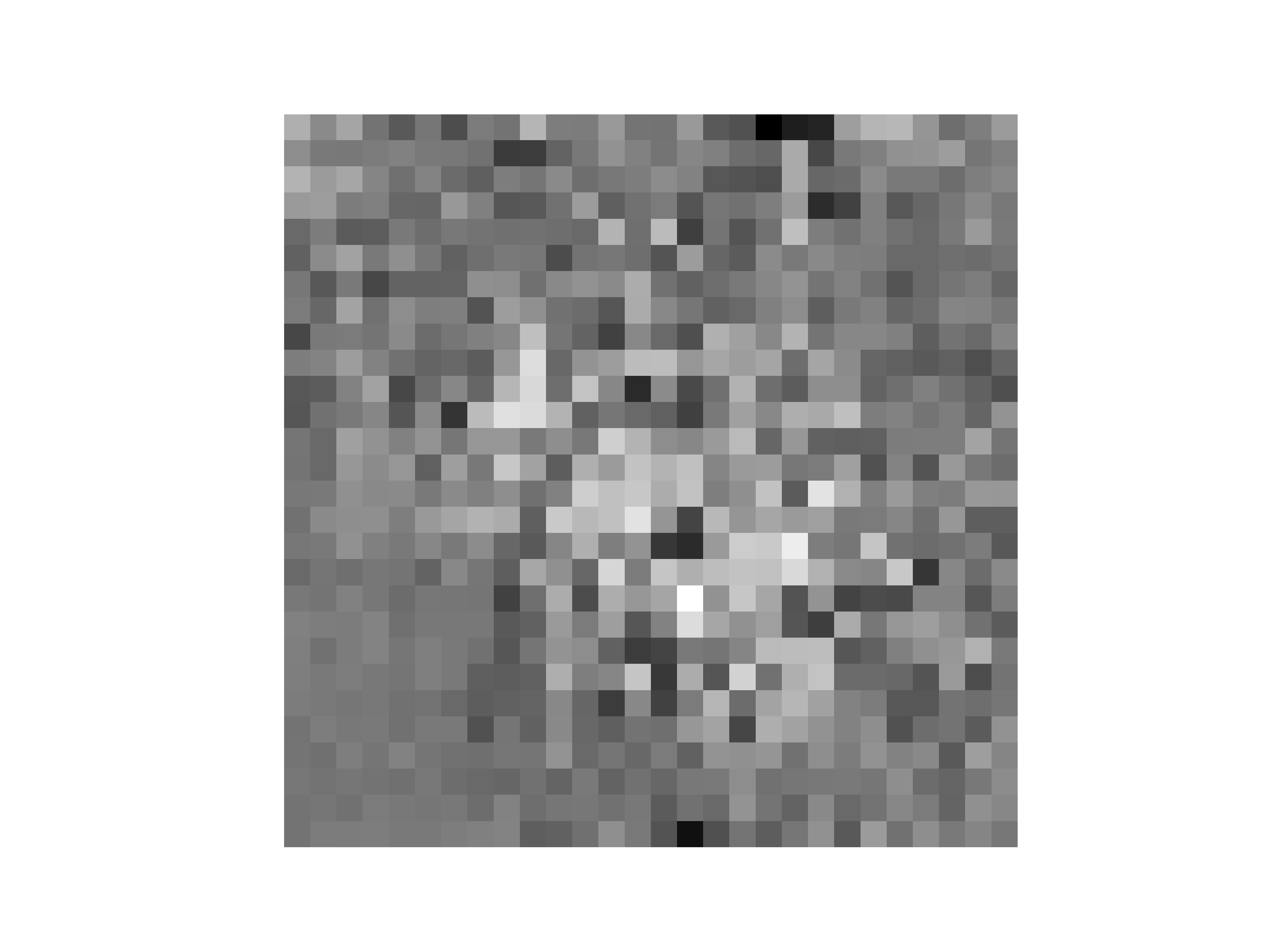} \includegraphics[width=2cm]{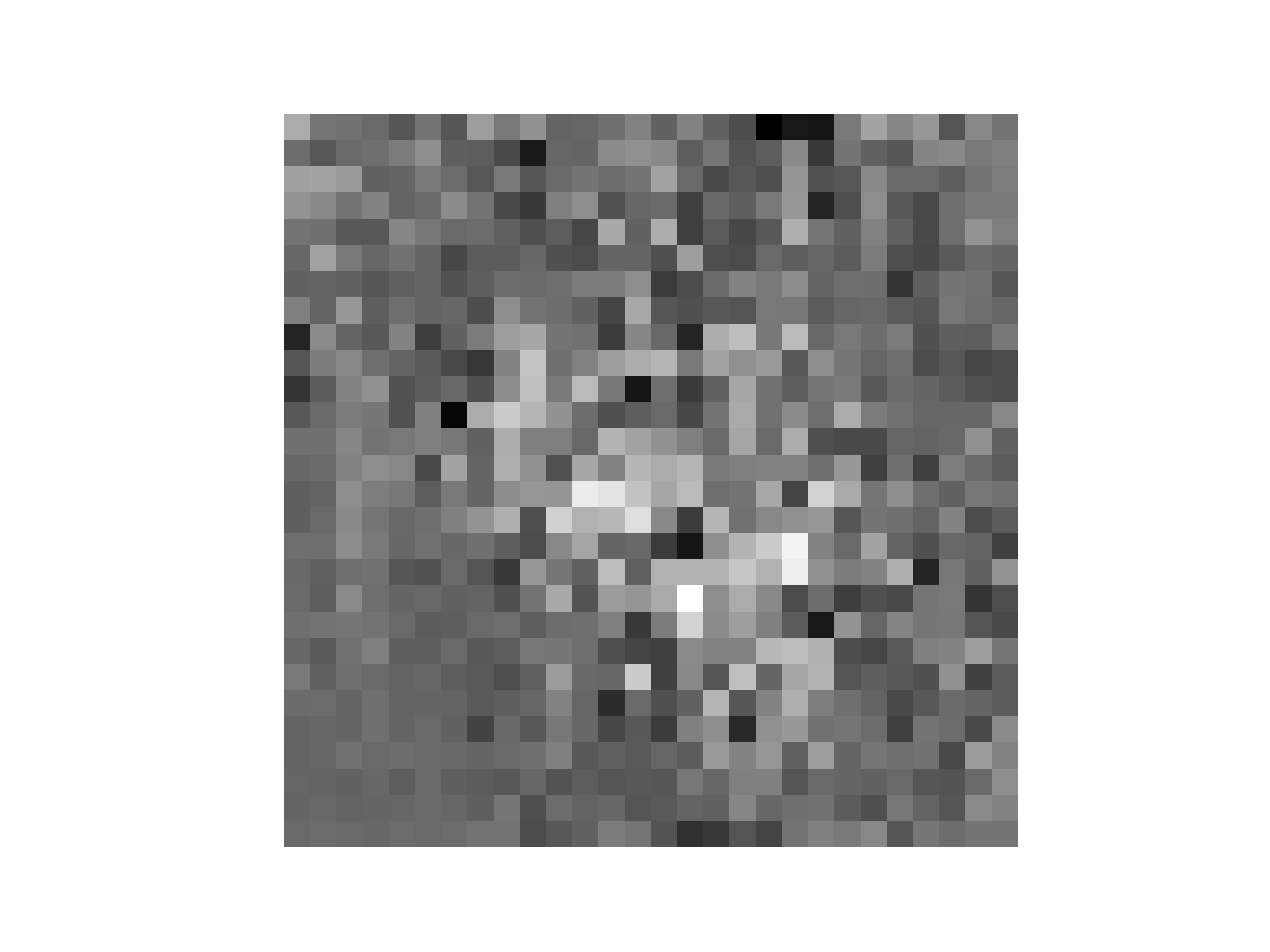} & 20.82 \\
    \bottomrule
  \end{tabular}
  \label{table_simulation_privacy}
\end{center}
\end{table*}

We verify the convergence of the proposed algorithm under different parameter settings. We set the step size of the agent to $\alpha_{i} = \alpha$ for all $i\in[n]$ and set the decaying weight factor to $\lambda_{k} = \frac{1}{k^{e}+m}$, where $e > 0$ and $m \ge 0$ are constants. Fig. \ref{fig_convergence_alpha}, \ref{fig_convergence_m}, and \ref{fig_convergence_e} illustrate the relative residuals of the algorithm (i.e., $\| \mathbf{x}^{k} - \mathbf{1}x^{*} \|_{2}^{2}/\| \mathbf{x}^{1} - \mathbf{1}x^{*} \|_{2}^{2}$) under different values of $\alpha$, $m$, and $e$, respectively. It can be observed that the algorithm rapidly achieves an acceptable level of optimization accuracy under the majority of parameter settings. Furthermore, we can deduce the following two rules: 1) the larger the step size $\alpha$, the faster the algorithm converges, and 2) the slower the weight factor $\lambda_{k}$ decays, the faster the algorithm converges. Both rules are consistent with the intuition of the algorithm.

We evaluate the privacy performance of the proposed algorithm through the distributed training of a CNN. We also use the communication network depicted in Fig. \ref{simulation_network}. All agents cooperatively train a CNN using the MNIST dataset of handwritten digits. Each agent has a local copy of the CNN and an image from the MNIST dataset. Agent 2 is an attacker attempting to recover agent 1's data using the DLG attack \citep{zhu2019deep}. We first conduct the experiment using the AB algorithm \citep{saadatniaki2020decentralized} with a step size of 0.01. In the AB algorithm, agent 2 can utilize equality (\ref{equation_leakage_GT}) to infer agent 1's gradient at iteration $K+1$ and then employ the DGL attack to recover agent 1's data. The experiment results are presented in the first row of Table \ref{table_simulation_privacy}, which reveals that the AB algorithm fails to protect agent 1's image data. This failure occurs because agent 2 can infer the exact gradient of agent 1 in the AB algorithm. The experimental results of our proposed algorithm are displayed in the second row of Table \ref{table_simulation_privacy}. The relevant parameters of the algorithm are set as $\alpha_{i} = 0.1, \lambda_{k} = \frac{1}{k^{0.8}+10}$. Agent 2 can use equality (\ref{equation_leakage_WGT}) to infer the gradient of agent 1 at iteration $K+1$. However, it can be observed that attacker 2 cannot successfully recover agent 1's original image, thereby confirming the privacy-preserving efficacy of the proposed algorithm.

\section{Conclusions}\label{section_conclusion}
In this paper, we propose a weighted gradient tracking-based distributed privacy-preserving algorithm, which eliminates the privacy leakage risk in gradient tracking by introducing a decaying weight factor. We demonstrate that the proposed algorithm can protect agents' private information, including their intermediate states and all gradients, from external eavesdroppers and honest-but-curious adversaries. The proposed privacy-preserving algorithm neither incurs additional computational and communication overheads nor requires extra topological assumptions. Furthermore, we characterize the convergence of the proposed algorithm under time-varying heterogeneous step sizes. Under the assumption that the objective function is strongly convex and smooth, we prove that the proposed algorithm converges precisely to the optimal solution. In future work, we will consider heterogeneous weight factors to enhance the algorithm's privacy further. Correspondingly, we need to develop a new analytical framework for the convergence analysis.




\bibliographystyle{myplainnat}        
\bibliography{mybibfile}           



\end{document}